\newcommand\X{\mathcal{X}}
\newcommand\Y{\mathcal{Y}}
\newcommand\Z{\mathcal{Z}}
\newcommand\F{\mathcal{F}}
\newcommand\x{\mathbf{x}}
\newcommand\y{\mathbf{y}}
\newcommand\z{\mathbf{z}}
\newcommand\s{\mathbf{s}}
\newcommand\ind[1]{\mathbbm{1}\left[#1\right]}
\newcommand\reals{\mathbb{R}}
\newcommand\fhat{\widehat{f}}
\newcommand\fstar{f^\star}
\renewcommand\E[1]{\mathbb{E}\left[#1\right]}
\newcommand\E[1]{\mathbb{E}\left[#1\right]}
\newcommand\Es[2]{\mathbb{E}_{#1}\left[#2\right]}
\DeclareMathOperator*{\argmin}{argmin}
\newcommand\prob[1]{\mathbb{P}\left(#1\right)}
\newcommand\pp{\mathbf{P}}
\newcommand\Pave{\bar{P}_n}
\newcommand\valiid{V^{\text{iid}}}
\newcommand\vcdim{\text{VCdim}}
\newcommand\fat{\text{fat}}
\newcommand\fatgamma{\fat_\gamma}
\newcommand\sfat{\text{sfat}}
\newcommand\sfatgamma{\sfat_\gamma}
\newcommand\rad{\mathfrak{R}}
\newcommand\valonline{V^{\text{online}}}
\newcommand\A{\mathcal{A}}
\newcommand\ldim{\text{Ldim}}
\newcommand\seqrad{\mathfrak{R}^\text{seq}}
\newcommand\pregret{\rho}
\newcommand\pregreto{\rho^{\text{online}}}
\newcommand\valgen{V^{\text{gen}}}
\newcommand\fhaterm{\fhat^{\text{ERM}}}
\newcommand\valpreq{V^{\text{preq}}}
\newcommand\bitv{\mathbf{b}}
\newcommand\tf{\tilde{f}}
\newcommand\tP{\tilde{P}}
\definecolor{darkgreen}{rgb}{0,0.5,0}
\definecolor{purple}{rgb}{1,0,1}
\newcommand{\kibitz}[2]{\ifnum\Comments=1\textcolor{#1}{#2}\fi}
\newcommand\newedit[1]{{#1}} 
\newtheorem{theorem}{Theorem}
\newtheorem{lemma}[theorem]{Lemma}
\theoremstyle{definition}
\newtheorem{definition}{Definition}
\begin{document}

\title{On Learnability under General Stochastic Processes}

\author{A. Philip Dawid and Ambuj Tewari\\
\texttt{apd@statslab.cam.ac.uk,tewaria@umich.edu}}


\maketitle

\begin{abstract}
Statistical learning theory under independent and identically distributed (iid) sampling and online learning theory for worst case individual sequences are two of the best developed branches of learning theory. Statistical learning under general non-iid stochastic processes is less mature. We provide two natural notions of learnability of a function class under a general stochastic process. We show that both notions are in fact equivalent to online learnability. Our results hold for both binary classification and regression.
\end{abstract}

\section{Introduction}

One of the most beautiful and best developed branches of machine learning theory is classical statistical learning theory (see the article by \citet{von-luxburg2011statistical} for a non-technical overview and for more extensive references). However, it deals primarily with independent and identically distributed (iid) sampling of examples. There have been several attempts to deal with both dependence and non-stationarity: we discuss some of these extensions in Section~\ref{sec:related}. However in general the non-iid case is not as well developed as the classical iid case. 

Another well developed branch of learning theory that has its own share of elegant mathematical ideas is online learning theory (the book by~\citet{cesa-bianchi2006prediction} is an excellent if somewhat dated introduction). With roots in game theory and the area of information theory known as universal prediction of individual sequences, online learning theory, unlike statistical learning theory, does {not} use probabilistic foundations. It is therefore quite surprising that there are uncanny parallels between iid learning theory and online learning theory. The reader is invited to compare the statements of the fundamental theorems in these two areas (restated in this paper as Theorem~\ref{thm:SLT} and Theorem~\ref{thm:OLT}).

Our main goal in this paper is to study learnability of a function class in the statistical setting under extremely general assumptions that do not require independence or stationarity. We first summarize the key theorems of iid and online learning in Section~\ref{sec:statistical} and Section~\ref{sec:online}. Although this material is not new, we feel that the broader data science community might not be very familiar with results in online learning since it is a younger field compared to statistical learning theory. Also, presenting both iid learning and online learning results in a unified way allows us to draw parallels between the two theories and to motivate the need for novel theories that connect these two. 

We propose a definition of learnability under general stochastic processes in Section~\ref{sec:general}. We show that learning under this general definition is equivalent to online learnability (Theorem~\ref{thm:general}). We give a prequential version of our main definition in Section~\ref{sec:prequential}. In the prequential version, as in online learning, the function output by the learning algorithm at any given time cannot peek into the future. We show that learnability under the prequential version of our general learning setting is also equivalent to online learnability (Theorem~\ref{thm:preq}). We focus on the problem of binary classification for simplicity. But we also provide extensions of our equivalence results to the problem of real valued prediction (i.e., regression) in Section~\ref{sec:regression} (see Theorem~\ref{thm:reg_general} and Theorem~\ref{thm:reg_preq}).

\subsection{Related Work}
\label{sec:related}

The iid assumption of statistical learning theory has been relaxed and replaced with various types of {\em mixing} assumptions, especially $\beta$-mixing~\citep{vidyasagar2002theory,mohri2009rademacher}. However, in this line of investigation, the stationary assumption is kept and the theory resembles the iid theory to a large extent since mixing implies approximate independence of random variables that are sufficiently separated in time. Mixing assumptions can be shown to hold for some interesting classes of processes, including some Markov and hidden Markov processes. Markov sampling has also been considered on its own as a generalization of iid sampling~\citep{aldous1995markovian,gamarnik2003extension,smale2009online}.

There has been work on performance guarantees of specific algorithms like boosting \citep{lozano2006convergence} and SVMs~\citep{steinwart2009consistency,steinwart2009learning} under non-iid assumptions. However, our focus here is not on any specific learning methodology. We would like to point out that, while we focus on {\em learnability} of functions in a fixed class, the question of {\em universal consistency} has also been studied in the context of general stochastic processes~\citep{nobel1999limits,hanneke2017learning}.

There are a handful of papers that focus, as we do, on conditional risk given a sequence of observation drawn from a general non-iid stochastic processes \citep{pestov2010predictive,shalizi2013predictive,zimin2017learning}. These papers focus on process decompositions: expressing a complex stochastic process as a mixture of simpler stochastic processes. For example, de Finetti's theorem shows that exchangeable distributions are mixtures of iid distributions. The basic idea is to output a function with small expected loss one step beyond the observed sample where the expectation is also conditioned on the observed sample. While closely related, our performance measures are cumulative in nature and are inspired more by regret analysis in online learning than PAC bounds in computational learning theory.

The use of tools from online learning theory (e.g., sequential Rademacher complexity) for developing learning theory for dependent, non-stationary process was pioneered by Kuznetsov and Mohri~\citep{kuznetsov2015learning,kuznetsov2017generalization}. However, their focus is on time series forecasting applications and therefore their performance measures always involve the expected loss of the function chosen by the learning algorithm some steps into the future (i.e., the part not seen by the learning algorithm) of the process. In contrast, our definition uses conditional distributions of the stochastic process {\em on the realized path} to define our performance measure. We also point out that there are earlier papers that apply learning theory tools to understand time series prediction~\citep{modha1998memory,meir2000nonparametric,alquier2012model}. Some very recent work has also begun to extend some of the work on time series to processes with spatial structure and dependence such as those occurring on a network~\citep{dagan2019learning}.

A direct inspiration for this paper is the work of \citet{skouras2000consistency} on estimation in semi-parametric statistical models under misspecification. They highlighted that, under misspecification, M-estimators, including maximum likelihood estimators, may not converge to a deterministic limit even asymptotically. Instead, the limit can be stochastic. This is because, under misspecification, the ``best'' model can depend on the observed sequence of data. They gave examples showing that this can happen for non-ergodic processes or processes with long range dependencies that do not decay fast enough. Our work can be seen as a direct extension of their ideas to the learning theory setting where the focus is not on parameter estimation but on loss minimization over potentially massive function spaces.

\section{Preliminaries}

We consider a supervised learning setting where we want to learn a mapping from an input space $\X$ to an output space $\Y$. Two output spaces of interest to us in this paper are $\Y = \{-1,+1\}$ (binary classification) and $\Y = [-1,+1]$ (regression). Instead of talking about the difficulty of learning individual functions, we will define learnability for a {\em class} of functions that we will denote by $\F \subseteq \Y^{\X}$. Let $\Z = \X \times \Y$ and let $\ell: \Z \times \F \to \reals _+$ be a loss function mapping an input-output pair $(x,y)$ and a function $f$ to a non-negative loss. The set $\{1,\ldots,n\}$ will be denoted by $[n]$ and we use $\ind{C}$ to denote an indicator function that is $1$ if the condition $C$ is true and $0$ otherwise. Two important loss functions are the $0$-$1$ loss $\ell((x,y),f) = \ind{y \neq f(x)}$ (in binary classification) and the absolute loss $\ell((x,y),f) = |y - f(x)|$ (in regression).

We often denote an input-output pair $(x,y)$ by $z$. When the input-output pair is random, we will denote it by $Z = (X,Y)$, perhaps with additional time indices such as $Z_t = (X_t,Y_t)$. We will use the abbreviation $Z_{1:t}$ to denote the sequence $Z_1,\ldots,Z_t$. A learning rule $\fhat_n$ is a map from $\Z^n$ to $\F$. We will abuse notation a bit and refer to the learning rule and the function output by the learning rule both by $\fhat_n$. An important learning rule is empirical risk minimization (ERM): given a sequence $z_{1:t}$ of input-output pairs, it outputs the function,
\begin{equation}\label{eq:erm}
\fhaterm_n = \argmin_{f \in \F} \frac{1}{n} \sum_{t=1}^n \ell(z_t,f) .
\end{equation}
Note that, for infinite function classes, the minimum may not be achieved. In that case, one can work with functions achieving empirical risks that are arbitrarily close to the infimum of the empirical risk over the class $\F$.

Given a distribution $P$ on $\Z$, the loss function can be extended as follows:
\[
\ell(P, f) = \Es{z\sim P}{\ell(z,f)} .
\]
The function minimizing the expectation above is
\[
\fstar_P = \argmin_{f \in \F} \ell(P, f).
\]
The $P$-regret of a function $f \in \F$ is defined as
\begin{align*}
\pregret(P, f) &= \ell(P, f) - \inf_{f' \in \F} \ell(P, f') \\
&= \ell(P, f) - \ell(P, \fstar_P).
\end{align*}
Note that the $P$-regret depends on the class $\F$ but we hide this dependence when the function class is clear from the context.

\section{Learnability in the IID Setting}
\label{sec:statistical}

In this section we review some basic results of statistical learning theory under iid sampling. For more details the reader can consult standard texts in this area~\citep{anthony1999neural,vidyasagar2002theory,shalev2014understanding}.
In the standard formulation of {statistical} learning theory, we draw a sequence $Z_{1:n}$ of iid examples from a distribution $P$. That is, the joint distribution of $Z_{1:n}$ is a product distribution $\pp = P \otimes P \otimes \ldots \otimes P$. We adopt the minimax framework to define learnability of a class $\F$ of functions with respect to a loss function $\ell$. Define the worst case performance of a learning rule $\fhat_n$ by
\[
\valiid_n(\fhat_n, \F) = \sup_{P} \E{ \pregret(P, \fhat_n) } 
\]
and the minimax value by
\[
\valiid_n(\F) = \inf_{\fhat_n} \valiid_n(\fhat_n, \F) .
\]
For the sake of conciseness, the notation above hides the fact that $\fhat_n$ depends on the sequence $Z_{1:n}$. The expectation above is taken over the randomness in these samples.

\begin{definition}
We say that $\F$ is learnable in the iid learning setting if
\[
\limsup_{n \to \infty} \valiid_n(\F) = 0 .
\]
Furthermore, we say that $\F$ is learnable via a sequence $\fhat_n$ of learning rules if
\[
\limsup_{n \to \infty} \valiid_n(\fhat_n, \F) = 0 .
\]
\end{definition}

One of the major achievements of statistical learning theory was the determination of necessary and sufficient conditions for learnability of a class $\F$. Learnability in both binary classification with $0$-$1$ loss and regression with absolute loss is known to be equivalent to a probabilistic condition, namely the uniform law of large numbers (ULLN) for the class $\F$:
\begin{equation}\label{eq:ULLN}
\limsup_{n \to \infty} \sup_{P} \E{ \sup_{f \in \F} \left| \frac{1}{n} \sum_{t=1}^n f(X_t) - Pf \right|} = 0 .
\end{equation}

Here $X_{1:n}$ are drawn iid from $P$ and $Pf = \Es{X \sim P}{f(X)}$.
Whether or not ULLN holds for a class $\F$ depends on the finiteness of different combinatorial parameters, depending on whether we are in the binary classification or regression setting. We will discuss the binary classification case here, leaving the regression case to Section~\ref{sec:regression}.

The VC dimension of $\F$, denoted by $\vcdim(\F)$, is the length $n$ of the longest sequence $x_{1:n}$ shattered by $\F$. We say that a sequence $x_{1:n}$ is shattered by $\F$ if
\[
\forall \epsilon_{1:n} \in \{\pm1\}^n, \exists f \in \F, \text{ s.t. } \forall t \in [n], f(x_t) = \epsilon_t .
\]
Finally, we recall the definition of the (expected) Rademacher complexity of a function class with respect to a distribution $P$:
\[
\rad_n(P, \F) = \E{ \sup_{f \in \F} \frac{1}{n} \sum_{t=1}^n \epsilon_t f(X_t) }
\]
Note that the expectation above is with respect to both $X_{1:n}$ and $\epsilon_{1:t}$. The former are drawn iid from $P$ whereas the latter are iid $\{\pm 1\}$-valued Rademacher (also called symmetric Bernoulli) random variables. The worst case, over $P$, Rademacher complexity is denoted by
\[
\rad_n(\F) = \sup_{P} \rad_n(P, \F) .
\]

\begin{theorem}\label{thm:SLT}
Consider binary classification with $0$-$1$ loss in the iid setting. Then, the following are equivalent:
\begin{enumerate}
    \item $\F$ is learnable.
    \item $\F$ is learnable via ERM.
    \item The ULLN condition~\eqref{eq:ULLN} holds for $\F$.
    \item $\vcdim(\F) < \infty$. \label{vccond}
    \item $\limsup_{n \to \infty} \rad_n(\F) = 0$.
\end{enumerate}
\end{theorem}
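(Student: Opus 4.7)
The plan is to establish the five-way equivalence via a cycle of implications, for instance $(4) \Rightarrow (5) \Rightarrow (3) \Rightarrow (2) \Rightarrow (1) \Rightarrow (4)$. The step $(2) \Rightarrow (1)$ is immediate because ERM is a valid learning rule. The forward half is quantitative (a combinatorial growth bound plus concentration of measure), while the reverse direction $(1) \Rightarrow (4)$ is by contrapositive using a no-free-lunch construction and is the only genuinely subtle piece.

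For $(4) \Rightarrow (5)$, I would invoke the Sauer--Shelah lemma to bound the growth function of a class of VC dimension $d$ by a polynomial in $n$ of degree $d$, then apply Massart's finite-class Rademacher lemma to the projection of $\F$ onto the sample $X_{1:n}$. This yields a uniform-in-$P$ estimate $\rad_n(\F) = O(\sqrt{(d/n)\log(n/d)})$, which tends to $0$. For $(5) \Rightarrow (3)$, I would use the standard symmetrization trick: introduce a ghost sample $X'_{1:n} \sim P$, rewrite $Pf - \frac{1}{n}\sum_t f(X_t)$ as a conditional expectation of $\frac{1}{n}\sum_t (f(X'_t) - f(X_t))$, insert iid Rademacher signs using the paired exchange symmetry, and bound the resulting quantity by $2\rad_n(\F)$ after taking $\sup_f$ and $\sup_P$. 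This yields the ULLN~\eqref{eq:ULLN}.

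For $(3) \Rightarrow (2)$, note that for $0$-$1$ loss with $Y, f(X) \in \{\pm 1\}$ the identity $\ind{y \ne f(x)} = (1 - yf(x))/2$ transfers the ULLN on $\F$ to a ULLN on the loss class $\{\ell(\cdot, f) : f \in \F\}$. A two-line ERM argument then gives $\ell(P, \fhaterm_n) \le \ell(P, \fstar_P) + 2\sup_{f\in\F}\bigl|\frac{1}{n}\sum_t \ell(Z_t, f) - \ell(P, f)\bigr|$, and taking expectations and $\sup_P$ on both sides yields $\valiid_n(\fhaterm_n, \F) \to 0$.

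The hard part, as usual, is $(1) \Rightarrow (4)$, which I would prove by contrapositive via a no-free-lunch argument. Suppose $\vcdim(\F) = \infty$; for each $n$, fix a shattered set $S_{2n} = \{x_1, \ldots, x_{2n}\}$ and consider the family of distributions $P_\sigma$, indexed by labelings $\sigma \in \{\pm 1\}^{2n}$, which draw $X$ uniformly from $S_{2n}$ and set $Y = \sigma_i$ whenever $X = x_i$. Shattering guarantees $\inf_{f \in \F} \ell(P_\sigma, f) = 0$ for every $\sigma$. Averaging over a uniformly random $\sigma$, any learner $\fhat_n$ has information only about the labels it has actually observed; on any unseen $x_i \in S_{2n}$ the label $\sigma_i$ remains an independent symmetric coin, so the conditional expected $0$-$1$ loss at such a point is $1/2$. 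A short counting argument (expected number of unseen points under $n$ iid uniform draws from $S_{2n}$ is $\Omega(n)$) then gives $\mathbb{E}_\sigma \E{\pregret(P_\sigma, \fhat_n)} \ge c$ for some absolute constant $c > 0$, so $\sup_\sigma \E{\pregret(P_\sigma, \fhat_n)} \ge c$, and $\valiid_n(\F) \not\to 0$. Making the ``$\Omega(n)$ unseen points'' estimate fully rigorous and carefully juggling the randomness over both the sample and $\sigma$ is the most delicate piece, though it is now a standard exercise.
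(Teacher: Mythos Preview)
Your outline is a correct and standard proof of the fundamental theorem of statistical learning, but there is nothing to compare it against: the paper does not prove Theorem~\ref{thm:SLT}. The theorem is stated in Section~\ref{sec:statistical} as a review of known material, with the reader referred to standard texts \citep{anthony1999neural,vidyasagar2002theory,shalev2014understanding} for details. Your cycle $(4)\Rightarrow(5)\Rightarrow(3)\Rightarrow(2)\Rightarrow(1)\Rightarrow(4)$ via Sauer--Shelah plus Massart, symmetrization, the ERM two-line argument, and the no-free-lunch lower bound is exactly the argument one finds in those references, so in that sense your proposal matches what the paper implicitly cites.
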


A similar result holds for regression with absolute loss with the VC dimension condition (i.e., condition number \ref{vccond} above) replaced with a similar one involving its scale-sensitive counterpart, called the fat shattering dimension (see Section~\ref{sec:reg_statistical} for details).

\section{Learnability in the Online Setting}
\label{sec:online}

A second learning setting with a well-developed theory is the {\em online} learning setting, where no probabilistic assumptions are placed on the data-generating process. Compared to statistical learning theory under iid sampling, online learning theory is a younger field. The main combinatorial parameter in this area, the Littlestone dimension, was defined by~\citet{littlestone1988learning}. It was given the name ``Littlestone dimension'' by \citet{ben-david2009agnostic}, where it was also shown that it fully characterizes learnability in the binary classification setting. Scale-sensitive analogues of Littlestone dimension for regression problems and the sequential version of Rademacher complexity were studied in~\citet{rakhlin2015online,rakhlin2015sequential}.

The online learning setting takes an individual sequence approach, where results are sought that hold for every possible sequence $z_{1:n} \in \Z^n$ that might be encountered by the learning rule. 

We consider a sequence $\fhat_{0:n-1}$ of learning rules, where $\fhat_t$ takes in as input the sequence $z_{1:t}$ and outputs a (possibly random) function in $\F$.
Define the expected (normalized) regret of $\fhat_{0:n-1}$ on sequence $z_{1:n}$:
\[
\pregreto(\fhat_{0:n-1},z_{1:n}) = 
\E{
\frac{1}{n}
\left( 
\sum_{t=1}^n \ell(z_t, \fhat_{t-1}) - \inf_{f \in \F} \sum_{t=1}^n \ell(z_t, f) 
\right)
}.
\]
This is similar in flavor to, but distinct from, the regret function $\pregret$ used in the iid setting.
 It obeys the {\em prequential principle}~\citep{dawid1984present}: performance of $\fhat_{t-1}$, which is learned using $z_{1:t-1}$, is judged using loss evaluated on $z_t$ with no overlap between data used for learning and for performance evaluation. The expectation is needed because the learning rules $\fhat_{0:n-1}$ may use internal randomization to achieve robustness to adversarial data.  The regret nomenclature comes from the fact that $\fhat_{0:n-1}$ cannot peek into the future to lower its loss but its cumulative performance is compared with lowest possible loss, in hindsight, over the entire sequence $z_{1:n}$. However, the comparator term has its own restriction: it uses the best fixed function $f$ in hindsight, as opposed to the best sequence of functions.

The object of interest is now the following minimax value:
\[
\valonline_n(\F) = \inf_
{\fhat_{0:n-1}} \valonline(\fhat_{0:n-1}, \F),
\]
where 
\[
\valonline(\fhat_{0:n-1}, \F) = \sup_{z_{1:n} \in \Z^n} \pregreto(\fhat_{0:n-1},z_{1:n})
\]
is the worst-case performance of the sequence $\fhat_{0:n-1}$ of learning rules, with $\fhat_t$ taking in as input the sequence $z_{1:t}$ and outputting a function in $\F$.
The infimum is then taken over all such learning rule sequences.

\begin{definition}
We say that $\F$ is learnable in the online learning setting if
\[
\limsup_{n \to \infty} \valonline_n(\F) = 0 .
\]
\end{definition}

As in statistical learning, we have necessary and sufficient conditions for learnability that almost mirror those in Theorem~\ref{thm:SLT}. The ULLN condition gets replaced by the Uniform Martingale Law of Large Numbers (UMLLN). We say that UMLLN holds for $\F$ if
\begin{equation}\label{eq:UMLLN}
\limsup_{n \to \infty} \sup_{\pp, \A} \E{ \sup_{f \in \F} \left| \frac{1}{n} \sum_{t=1}^n \left( f(X_t) - \E{f(X_t)|\A_{t-1}} \right) \right|} = 0 .
\end{equation}
The crucial difference between the UMLLN condition and the ULLN condition is that here the supremum is taken over {\em all} joint distributions $\pp$ of $X_{1:n}$. In particular $X_{1:n}$ need not be iid. Also, to obtain a martingale structure, we use an arbitrary filtration $\A = (\A_t)_{t=0}^{n-1}$ such  that $X_t$ is $\A_t$-measurable.  It is easy to see that UMLLN is a stronger condition than ULLN: simply restrict $\pp$ to be a product distribution and let $\A$ be the natural filtration of $X_t$. Then the UMLLN condition reduces to the ULLN condition.

The VC dimension of $\F$ is replaced by another combinatorial parameter, called the Littlestone dimension of $\F$, denoted by $\ldim(\F)$. Before we present the definition of Littestone dimension, we need some notation to handle complete binary trees labeled with examples drawn from the input space $\X$. We think of a complete binary tree $\x$ of depth $n$ as defining a sequence $\x_t, 1 \le t \le n$, of maps. The map $\x_t$ gives us the examples sitting at level $t$ of the tree. For example, $\x_1$ is the root, $\x_2(-1)$ is the left child of the root, $\x_2(+1)$ is the right child of the root, and so on. In general $\x_t(\epsilon_{1:t-1})$ is the node at level $t$ that we reach by following the path given by the sign sequence $\epsilon_{1:t-1} \in \{\pm1\}^{t-1}$, where $-1$ means ``go left'' and $+1$ means ``go right''. The Littlestone dimension of $\F$ is the depth $n$ of the largest complete binary tree $\x$ shattered by $\F$. We say that a complete binary tree $\x$ is shattered by $\F$ if
\[
\forall \epsilon_{1:n} \in \{\pm1\}^n, \exists f \in \F, \text{ s.t. } \forall t \in [n], f(\x_t(\epsilon_{1:t-1})) = \epsilon_t .
\]

Finally, Rademacher complexity gets replaced with its sequential analogue, called the sequential Rademacher complexity. We first define the sequential Rademacher complexity of $\F$ given a tree $\x$ of depth $n$ as:
\[
\seqrad_n(\x, \F) = \E{ \sup_{f \in \F} \frac{1}{n} \sum_{t=1}^n \epsilon_t f(\x_t(\epsilon_{1:t-1})) } .
\]
Note that the expectation above is only with respect to the Rademacher random variables $\epsilon_{1:t}$ as $\x$ is a fixed tree. Taking the worst case over all complete binary trees $\x$ of depth $n$ gives us the sequential Rademacher complexity of $\F$:
\[
\seqrad_n(\F) = \sup_{\x} \seqrad_n(\x, \F) .
\]

\begin{theorem}\label{thm:OLT}
Consider binary classification with $0$-$1$ loss in the online (individual sequence) setting. Then, the following are equivalent:
\begin{enumerate}
    \item $\F$ is learnable.
    \item The UMLLN condition~\eqref{eq:UMLLN} holds for $\F$.
    \item $\ldim(\F) < \infty$. \label{lcond}
    \item $\limsup_{n \to \infty} \seqrad_n(\F) = 0$.
\end{enumerate}
\end{theorem}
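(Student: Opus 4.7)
The plan is to establish the four equivalences via the cycle (3) $\Rightarrow$ (4) $\Rightarrow$ (1) $\Rightarrow$ (3), and to handle (2) $\Leftrightarrow$ (4) separately. The overall strategy is to build tree-based and martingale analogues of each ingredient used in the iid proof of Theorem~\ref{thm:SLT}: a tree version of Sauer's lemma in place of Sauer's lemma, sequential symmetrization on trees in place of ordinary symmetrization against a ghost sample, and Littlestone's Standard Optimal Algorithm (SOA), suitably randomized, in place of ERM.

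For (3) $\Rightarrow$ (4), I would first prove the tree analogue of Sauer's lemma: if $\ldim(\F) = d$, then on any depth-$n$ tree $\x$ the class $\F$ realizes only $n^{O(d)}$ distinct sign patterns along root-to-leaf paths. Combining this with a finite-class, Massart-style bound on trees yields $\seqrad_n(\F) = O(\sqrt{d \log n / n}) \to 0$. For (4) $\Rightarrow$ (1), the plan is to bound the minimax online regret above by a constant times $\seqrad_n(\F)$ via sequential symmetrization, and then exhibit an explicit learner (for example, exponentially weighted aggregation over SOA-derived experts) whose regret matches this rate. For (1) $\Rightarrow$ (3), I argue the contrapositive: if $\ldim(\F) = \infty$, then for every $n$ a shattered depth-$n$ tree exists, and an adversary that walks down this tree while flipping each label uniformly at random forces any learner's expected number of mistakes to $n/2$, while in hindsight some $f \in \F$ matches the realized path, giving $\valonline_n(\F) \ge 1/2$. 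Finally, for (2) $\Leftrightarrow$ (4), I would invoke the Rakhlin-Sridharan-Tewari sequential symmetrization machinery: the supremum in~\eqref{eq:UMLLN} over joint laws $\pp$ and filtrations $\A$ can be recast as a supremum over binary trees, and inserting Rademacher signs against a tangent (ghost) copy then equates that quantity to $\seqrad_n(\F)$ up to universal constants.

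The main technical obstacle is the sequential symmetrization itself, which underpins both (4) $\Rightarrow$ (1) and (2) $\Leftrightarrow$ (4). Unlike the iid setting, introducing an independent ghost sample does not erase the dependence structure among the $Z_t$; one must instead produce, conditionally on the past, a tangent copy $\tilde{Z}_t$ and construct the underlying tree on which Rademacher signs decouple dependence level by level, while carefully tracking the measurability of each node with respect to the filtration $\A$. The other delicate piece is lifting Littlestone's SOA from the realizable mistake-bound regime to a uniform agnostic regret bound matching the sequential Rademacher upper bound, which forces one out of a pure combinatorial analysis and into expert-aggregation machinery.
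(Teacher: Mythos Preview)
The paper does not prove Theorem~\ref{thm:OLT}; it is stated as a known result from the online learning literature, with the equivalences attributed to \citet{littlestone1988learning}, \citet{ben-david2009agnostic}, and \citet{rakhlin2015online,rakhlin2015sequential}. Your proposal is a faithful outline of how those references establish the result: the tree Sauer lemma and Massart bound for (3)$\Rightarrow$(4), sequential symmetrization for (4)$\Rightarrow$(1) and (2)$\Leftrightarrow$(4), the shattered-tree adversary for the contrapositive of (1)$\Rightarrow$(3), and the SOA-based expert construction from \citet{ben-david2009agnostic} for an explicit learner. There is nothing to compare against here beyond noting that you have correctly reconstructed the standard argument the paper is citing.

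One minor remark: in your (4)$\Rightarrow$(1) step you list two separate mechanisms---sequential symmetrization bounding the minimax value by $\seqrad_n(\F)$, and the explicit experts-over-SOA learner---as if both are needed. In fact either suffices for the implication: the symmetrization argument of \citet{rakhlin2015online} already gives $\valonline_n(\F) \le 2\,\seqrad_n(\F)$ non-constructively, while the experts construction of \citet{ben-david2009agnostic} yields a concrete algorithm with regret $O(\sqrt{\ldim(\F) n \log n})$ and bypasses sequential Rademacher complexity entirely. Presenting them as a single combined step slightly obscures that these are two independent routes to the same implication.
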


As in the iid setting, a similar result holds for online regression with absolute loss, with the Littlestone dimension condition (i.e., condition number \ref{lcond} above) replaced by a similar one involving its scale-sensitive counterpart, called the sequential fat shattering dimension (see Section~\ref{sec:reg_online} for details).

It is well known that online learnability is harder than iid learnability. That is, $\vcdim(\F) \le \ldim(\F)$ for any $\F$, and the gap in this inequality can be arbitrarily large. For example, the set of threshold functions on $\reals$:
\begin{equation}
\label{eq:thresh}
\F_{\text{threshold}}
= \{
x \mapsto \ind{x > \theta}
\::\:
\theta \in \reals
\}
\end{equation}
has $\vcdim(\F_{\text{threshold}}) = 1$
but $\ldim(\F_{\text{threshold}}) = \infty$.

A conspicuous difference between Theorem~\ref{thm:SLT} and Theorem~\ref{thm:OLT} is the absence of the condition involving ERM. Indeed, ERM is not necessarily a good learning rule in the online setting: there exist classes learnable in the online setting that are not learnable via ERM. Unfortunately, the learning rules that learn a class $\F$ in the online setting are quite complex \citep{ben-david2009agnostic}. It is not known if there exists a rule as simple as ERM that will learn a class $\F$ whenever $\F$ is online learnable. In any case, ERM does {not} play as central a role in online learning as it does in learning in the iid setting.

\section{Learnability under General Stochastic Processes}
\label{sec:general}

In this section we move beyond the iid setting to cover {\em all} distributions, not just product distributions. 
For a general stochastic process $\pp$, we still have an analogue of $P$ at time $t$, namely \[
P_t(\cdot; z_{1:t-1}) = \pp(\cdot|Z_{1:t-1}=z_{1:t-1}) .
\]
This is the conditional distribution of $Z_t$ given $Z_{1:t-1}$. Just like $P$, this is unknown to the learning rule. However, unlike $P$ in the iid case, $P_t$ is {\em data-dependent}. Therefore the $P_t$-regret of a function $\pregret(P_t, f)$ is data-dependent. We will often hide the dependence of $P_t$ on past data $Z_{1:t-1}$. We can use the average of the $P_t$-regrets,
\[
R_n(Z_{1:n}, f) = \frac{1}{n} \sum_{t=1}^n \pregret(P_t, f)
\]
as a performance measure. Note that the minimizer of this performance measure is data-dependent, unlike in the iid case. \newedit{As in the iid setting a learning rule $\fhat_n$ is a map from $\Z^n$ to $\F$. To reduce clutter in our notation, we will continue to hide the dependence of $\fhat_n$ on the realized sample $Z_{1:n}$.} The value of a learning rule $\fhat_n$ is now defined as
\begin{align*}
\valgen_n(\fhat_n, \F)
&= \sup_{\pp} \E{
R_n(Z_{1:n}, \fhat_n) - \inf_{f \in \F} R_n(Z_{1:n}, f)
} \\
&= \sup_{\pp} \E{
 \frac{1}{n} \sum_{t=1}^n \ell(P_t, \fhat_n)
                        - \inf_{f \in \F} \frac{1}{n} \sum_{t=1}^n \ell(P_t, f)
} ,
\end{align*}
where the supremum is now taken over all joint distributions $\pp$ over $Z_{1:n}$.
This leads to consideration of the following minimax value to define learnability:
\[
\valgen_n(\F) = \inf_{\fhat_n} \valgen_n(\fhat_n, \F) .
\]

\begin{definition}
We say that $\F$ is \newedit{process learnable} if
\[
\limsup_{n \to \infty} \valgen_n(\F) = 0 .
\]
Furthermore, we say that $\F$ is \newedit{process learnable} via a sequence $\fhat_n$ of learning rules if
\[
\limsup_{n \to \infty} \valgen_n(\fhat_n, \F) = 0 .
\]
\end{definition}

Note that in the iid case, when $\pp$ is a product distribution with marginal $P$, we have $P_t = P$ for all $t$ and therefore, for any $f$,
\[
\frac{1}{n} \sum_{t=1}^n \ell(P_t, f) = \ell(P, f) .
\]
We have the following result as an immediate consequence.

\begin{lemma}
\label{lem:genvsstat}
Fix any loss function $\ell$ and function class $\F$. For any learning rule $\fhat_n$, $\valgen(\fhat_n, \F) \ge \valiid(\fhat_n, \F)$. This also means that $\valgen_n(\F) \ge \valiid_n(\F)$.
\end{lemma}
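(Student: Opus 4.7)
The plan is to prove this by observing that the supremum in the definition of $\valgen_n$ is taken over the strictly larger class of all joint distributions, while the supremum in the definition of $\valiid_n$ is taken over only product distributions. The paper has already pointed out (just before the lemma) that for a product distribution $\pp = P \otimes \cdots \otimes P$, the conditional distributions satisfy $P_t(\cdot; z_{1:t-1}) = P$ for all $t$ and all histories, so $\frac{1}{n}\sum_{t=1}^n \ell(P_t, f) = \ell(P, f)$ for every $f \in \F$. This is the only non-trivial ingredient, and it is already in place.

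Concretely, I would fix a learning rule $\fhat_n$, fix any product distribution $\pp = P^{\otimes n}$, and substitute the identity above into the definition of $\valgen_n(\fhat_n, \F)$. The inner expression
\[
\frac{1}{n}\sum_{t=1}^n \ell(P_t, \fhat_n) - \inf_{f \in \F} \frac{1}{n}\sum_{t=1}^n \ell(P_t, f)
\]
collapses to $\ell(P,\fhat_n) - \inf_{f \in \F}\ell(P,f) = \pregret(P,\fhat_n)$. Taking expectation yields $\E{\pregret(P,\fhat_n)}$, which is exactly the quantity appearing inside the supremum in the definition of $\valiid_n(\fhat_n, \F)$.

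Since the supremum in $\valgen_n(\fhat_n,\F)$ ranges over all joint distributions on $Z_{1:n}$, it is bounded below by the supremum of the same expression restricted to product distributions, which by the computation above equals $\valiid_n(\fhat_n,\F)$. This gives $\valgen_n(\fhat_n,\F) \ge \valiid_n(\fhat_n,\F)$ for every learning rule $\fhat_n$. Taking the infimum over $\fhat_n$ on both sides preserves the inequality and yields $\valgen_n(\F) \ge \valiid_n(\F)$.

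There is no real obstacle here; the lemma is essentially a bookkeeping consequence of the fact that product distributions form a subclass of general joint distributions and that the general performance measure reduces to the iid performance measure on that subclass. The only point to be careful about is the handling of the infimum over $f \in \F$ when it is not attained, but this is already addressed by the convention (noted after equation~\eqref{eq:erm}) of working with near-minimizers, and the argument above is unaffected since it only uses the equality of the two $\inf$'s on product distributions.
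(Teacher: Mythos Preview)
Your proposal is correct and matches the paper's reasoning exactly: the paper states the lemma as ``an immediate consequence'' of the observation that $P_t = P$ for product distributions, and you have simply spelled out that immediate consequence in detail. There is nothing to add.
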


The result above is not surprising: \newedit{process learnability has to be harder than iid learnability}. However, somewhat surprisingly, we can show that \newedit{process learnability} is at least as hard as online learnability.

\begin{theorem}
\label{thm:ldimlowerbound}
Consider binary classification with 0-1 loss in the general stochastic process setting. Suppose the class $\F$ is not online learnable, i.e., $\ldim(\F) = \infty$. Then for any $n \ge 1$,
$\valgen_n(\F) \ge 1/8$. Therefore, the class $\F$ is not \newedit{process learnable}.
\end{theorem}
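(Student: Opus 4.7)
The plan is to exhibit, for each $n$ and each learning rule $\fhat_n$, a stochastic process $\pp$ under which the expected process-regret is at least $1/8$, exploiting the shattered trees guaranteed by $\ldim(\F)=\infty$. I fix a shattered tree $\x$ of depth $n+1$ and define $\pp$ by random descent: sample $\epsilon_{1:n}$ iid uniform on $\{\pm 1\}$ and, at step $t$ with the prefix $\epsilon_{1:t-1}$ recoverable from $Z_{1:t-1}$, let the conditional distribution $P_t$ put mass $1/2$ on each of the two sibling observations
\[
(a_t,+1) = (\x_{t+1}(\epsilon_{1:t-1},+1),+1), \qquad (b_t,-1) = (\x_{t+1}(\epsilon_{1:t-1},-1),-1),
\]
drawing $Z_t \sim P_t$ so that $\epsilon_t$ records which sibling was realized. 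With this construction $\ell(P_t,f) = \frac{1}{2}[\ind{f(a_t)\ne +1} + \ind{f(b_t)\ne -1}]$ depends on $f$'s value at both pair members, yet the learner observes only one of them per step.

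To upper-bound $\inf_f R_n(Z,f)$, I would use the shattering of $\x$ to produce a single $f^\star \in \F$ with $\ell(P_t,f^\star)=0$ for every $t$, giving $\inf_f R_n(Z,f) \le 0$. To lower-bound $\E{R_n(Z,\fhat_n)}$, I would use a ghost-sample / symmetrization argument: flipping the random bit $\epsilon_t$ conditional on the other $\epsilon$'s swaps which sibling is observed while the joint law of the sample is preserved. Since $\fhat_n$ is a deterministic function of the sample and its value at the unobserved sibling cannot exploit the hidden bit $\epsilon_t$, $\fhat_n$'s prediction there must err with probability at least $1/2$ averaged over the symmetry, contributing at least $1/4$ per step to $\E{\ell(P_t,\fhat_n)}$. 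Summing over $t$ yields $\E{R_n(Z,\fhat_n)} \ge 1/4$, and hence $\valgen_n(\fhat_n,\F) \ge 1/4 \ge 1/8$.

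The hard part is the construction of the global $f^\star$, together with the verification of the ghost-sample symmetry. Littlestone shattering directly supplies only the path witnesses $f_\tau$ for each $\tau \in \{\pm 1\}^{n+1}$, whereas we need a single $f$ satisfying $2n$ sibling-pair constraints simultaneously; one natural remedy is to descend to a tree of larger depth and pair consecutive levels so that the $2n$ constraints align with the labels of a single shattering path. The symmetrization then needs care to ensure that the learner's prediction at the unobserved sibling is genuinely balanced on average, which relies on the richness of $\F$ implied by $\ldim(\F)=\infty$ --- in particular, that $\F$ admits pairs of functions agreeing on every observed point but disagreeing at the unobserved one, so that no deterministic learner can systematically guess the hidden label.
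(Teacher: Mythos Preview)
Your construction has a structural gap that is not repaired by the suggested ``pair consecutive levels'' remedy.  The process $\pp$ you describe carries no hidden parameter: the labels at the two siblings are fixed in advance by the tree (the $+1$-child always carries label $+1$, the $-1$-child always carries label $-1$).  If the nodes of the shattered tree are pairwise distinct --- as they are for the canonical shattered tree of thresholds --- then from $Z_{1:n}$ the learner recovers the entire path $\epsilon_{1:n}$, hence every pair $(a_t,b_t)$ and the full conditional law $P_t$ for all $t$.  The learner can therefore simply output
\[
\fhat_n \in \argmin_{f\in\F}\ \sum_{t=1}^n \ell(P_t,f),
\]
which drives the process-regret to zero.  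In particular your ghost-sample step does not go through: flipping $\epsilon_t$ does not merely swap which sibling is observed at time $t$; it changes all of $Z_t,\ldots,Z_n$ and with them $\fhat_n$, so there is no fixed prediction at the unobserved sibling to which a coin-flip argument can be applied.  Relatedly, the claim $\inf_f R_n(Z,f)=0$ fails already for thresholds: with the binary-search tree one has $a_t>b_t$, so no threshold can give $a_t$ label $+1$ and $b_t$ label $-1$ simultaneously.  But even if you reverse the labels, the point above stands: whatever the comparator can do, the learner can match, because nothing is hidden from it.

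The paper's proof proceeds quite differently.  It invokes the theorem of Alon, Bun, Livni, Malliaris and Moran that $\ldim(\F)=\infty$ forces $\F$ to embed $2^{2^n+1}$ thresholds, and then builds a \emph{family} of realizable processes $\{\pp_{\bitv}\}$ indexed by a hidden bit-string $\bitv$; for each fixed $\bitv$ the labels are given by the threshold $f_{\bitv}\in\F$, so the comparator has zero loss.  The process is designed so that the sample reveals only a proper prefix of $\bitv$, and conditionally on $\epsilon_t=-1$ the bit of $\bitv$ governing the label of the unobserved high-resolution point is independent of the entire sample.  It is this hidden parameter --- absent from your construction --- that makes the learner's prediction at the unobserved point a genuine coin flip.
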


To complement the lower bound above, we will now give a performance guarantee for ERM in the general stochastic process setting. Given a loss $\ell$ and function class $\F$, define the loss class $\ell \circ \F$ as
\[
\ell \circ \F =
\{
z \mapsto \ell(z,f) \::\:
f \in \F 
\} .
\]
We define the sequential Rademacher complexity of a loss class $\ell \circ \F$ as
\begin{align*}
\seqrad_n(\z, \ell \circ \F) &= \E{ \sup_{f \in \F} \frac{1}{n} \sum_{t=1}^n \epsilon_t \ell(\z_t(\epsilon_{1:t-1}), f) } ,\\
\seqrad_n(\ell \circ \F) &= \sup_{\z} \seqrad_n(\z, \ell \circ \F) .
\end{align*}
Note that the supremum here is over $\Z$-valued trees that are labeled with input-output pairs.
It is easy for us to connect the complexity to the loss class to the complexity of the underlying function class for a simple loss function like the $0$-$1$ loss  (see Appendix~\ref{sec:losstofunction} for details.)

\begin{theorem}
\label{thm:gentorad}
Fix any loss function $\ell$ and function class $\F$. Let $\fhaterm$ denote the ERM learning rule defined in~\eqref{eq:erm}. Then we have
\[
\valgen_n(\F) \le \valgen_n(\fhaterm_n, \F) \le 4 \seqrad_n(\ell \circ \F).
\]
\end{theorem}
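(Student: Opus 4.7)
The first inequality $\valgen_n(\F) \le \valgen_n(\fhaterm_n,\F)$ is immediate, since $\valgen_n(\F)$ is defined as the infimum of $\valgen_n(\cdot,\F)$ over all learning rules $\fhat_n$. So the work lies entirely in the second inequality.

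My plan is to decompose the $\valgen$-regret of ERM into a uniform deviation between the (path-dependent) conditional risks $\ell(P_t,f) = \E{\ell(Z_t,f)\mid Z_{1:t-1}}$ and the empirical loss $\frac{1}{n}\sum_t \ell(Z_t,f)$, then apply a symmetrization argument for martingale differences to connect this deviation to the sequential Rademacher complexity $\seqrad_n(\ell\circ\F)$. Concretely, fix any $\pp$ and let $f^\dagger$ be (up to $\epsilon$) a minimizer of $\frac{1}{n}\sum_t \ell(P_t,f)$; note that $f^\dagger$ may depend on the path, since the $P_t$'s do. Using the definition of ERM to cancel the empirical loss of $\fhaterm_n$ against that of $f^\dagger$, I obtain the standard add-and-subtract bound
\[
\frac{1}{n}\sum_{t=1}^n \ell(P_t,\fhaterm_n) - \inf_{f\in\F}\frac{1}{n}\sum_{t=1}^n \ell(P_t,f)
\le 2\sup_{f\in\F}\left|\frac{1}{n}\sum_{t=1}^n \left(\ell(P_t,f) - \ell(Z_t,f)\right)\right|,
\]
valid pointwise on every sample path (after taking $\epsilon\to 0$). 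Taking expectation over $\pp$ and then supremum over $\pp$ gives
\[
\valgen_n(\fhaterm_n,\F) \le 2\sup_{\pp}\E{\sup_{f\in\F}\left|\frac{1}{n}\sum_{t=1}^n \left(\ell(Z_t,f) - \E{\ell(Z_t,f)\mid Z_{1:t-1}}\right)\right|}.
\]

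The second step is to recognize the right-hand side as exactly the quantity controlled by the UMLLN-style sequential symmetrization inequality of Rakhlin--Sridharan--Tewari: for any loss class on any joint distribution $\pp$, the expected uniform deviation of a martingale-difference average from zero is at most $2\seqrad_n(\ell\circ\F)$. This is the ``tangent sequence'' symmetrization where one replaces each $Z_t$ by an independent copy $Z_t'$ sampled from $P_t$, then introduces Rademacher signs, and finally passes from the resulting paired tree to the worst-case binary tree $\z$ over $\Z$ that appears in the definition of $\seqrad_n(\ell\circ\F)$. Combining the factor $2$ from the ERM decomposition with the factor $2$ from symmetrization yields the claimed $4\seqrad_n(\ell\circ\F)$ bound.

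The main obstacle, and the place where the argument is genuinely delicate, is the symmetrization step: unlike the iid case, the $Z_t$ are neither independent nor identically distributed, so the usual ``ghost sample'' trick must be replaced by the sequential construction on binary trees, and care is needed to argue that the supremum over joint distributions $\pp$ collapses into a supremum over $\Z$-valued trees $\z$. I would cite the RST symmetrization lemma for this step rather than re-deriving it, noting that it applies verbatim to the bounded loss class $\ell\circ\F$. The only other technical wrinkle is the minor issue that $\fhaterm_n$ may not exist when the infimum in~\eqref{eq:erm} is not attained; this is handled, as the paper already notes, by passing to near-minimizers and sending the slack to zero.
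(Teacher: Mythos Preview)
Your proposal is correct and follows essentially the same route as the paper: both add and subtract the empirical loss, use the ERM property to drop the middle term, bound the remainder by a uniform martingale deviation over $\ell\circ\F$, and then invoke Theorem~2 of \citet{rakhlin2015sequential} to pass to $\seqrad_n(\ell\circ\F)$. The only cosmetic difference is that the paper keeps the two one-sided suprema $\sup_f(\ell(P_t,f)-\ell(Z_t,f))$ and $\sup_f(\ell(Z_t,f)-\ell(P_t,f))$ separate rather than merging them into your single $2\sup_f|\cdot|$; either way the constant comes out to $4$.
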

\begin{proof}
The first inequality is true by definition of $\valgen_n(\F)$. So we just have to prove the second one.

Note, by definition of $\fhaterm$,
\[
\frac{1}{n} \sum_{t=1}^n \ell(Z_t, \fhaterm_n) =
\inf_{f \in \F} \frac{1}{n} \sum_{t=1}^n \ell(Z_t, f) .
\]
Therefore, we have
\begin{align}
\notag
&\quad R_n(Z_{1:n}, \fhaterm_n) - \inf_{f \in \F} R_n(Z_{1:n}, f) \\
\notag
&= \frac{1}{n} \sum_{t=1}^n \ell(P_t, \fhaterm_n)
    - \inf_{f \in \F} \frac{1}{n} \sum_{t=1}^n \ell(P_t, f) \\
\notag
&=  \frac{1}{n} \sum_{t=1}^n \ell(P_t, \fhaterm_n)
    -  \frac{1}{n} \sum_{t=1}^n \ell(Z_t, \fhaterm_n) + \inf_{f \in \F} \frac{1}{n} \sum_{t=1}^n \ell(Z_t, f)
    - \inf_{f \in \F} \frac{1}{n} \sum_{t=1}^n \ell(P_t, f) \\
&\le \sup_{f \in \F} \frac{1}{n} \left( \sum_{t=1}^n \ell(P_t, f)
    - \ell(Z_t, f) \right) + \sup_{f \in \F} \frac{1}{n} \left( \sum_{t=1}^n \ell(Z_t, f)
    - \ell(P_t, f) \right) . \label{eq:twosups}
\end{align}
The justification for the last inequality is as follows. First, we know that $\fhaterm_n \in \F$. Second, when $\inf_{f \in \F} \frac{1}{n} \sum_{t=1}^n \ell(P_t, f) $ is achieved, at $\fstar$ say, we have,
\begin{align*}
&\inf_{f \in \F} \frac{1}{n} \sum_{t=1}^n \ell(Z_t, f)  - \inf_{f \in \F} \frac{1}{n} \sum_{t=1}^n \ell(P_t, f)\\
&\leq \frac{1}{n} \sum_{t=1}^n \ell(Z_t, \fstar)  - \frac{1}{n} \sum_{t=1}^n \ell(P_t, \fstar) \\
&\leq \sup_f \frac{1}{n} \left(\sum_{t=1}^n \ell(Z_t, f)  - \frac{1}{n} \sum_{t=1}^n \ell(Z_t, f)\right).
\end{align*}
Taking expectations on both sides of~\eqref{eq:twosups} gives us
\begin{align*}
& \quad \E{ R_n(Z_{1:n}, \fhaterm_n) - \inf_{f \in \F} R_n(Z_{1:n}, f) } \\
&\le \E{
\sup_{f \in \F} \frac{1}{n} \left( \sum_{t=1}^n \ell(P_t, f) - \ell(Z_t, f) \right)
} \\
&\quad + \E{
\sup_{f \in \F} \frac{1}{n} \left( \sum_{t=1}^n \ell(Z_t, f)
    - \ell(P_t, f) \right)
} \\
&\le 4 \seqrad_n(\ell \circ \F) .
\end{align*}
Note that the last inequality follows from Theorem 2 of \citet{rakhlin2015sequential}. Since the last quantity above does not depend on $\pp$, we can take supremum over $\pp$ on both sides to finish the proof.
\end{proof}

We now have everything in place to be able to show the equivalence of \newedit{process learnability} and online learnability. \newedit{A similar result can also be shown in the regression case (see Section~\ref{sec:reg_general}).}

\begin{theorem}\label{thm:general}
Consider binary classification with $0$-$1$ loss. Then all of the equivalent conditions in Theorem~\ref{thm:OLT} are also equivalent to:
\begin{itemize}
    \item $\F$ is \newedit{process learnable.}
\end{itemize}
\end{theorem}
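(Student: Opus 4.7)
The strategy is to sandwich process learnability between two conditions from Theorem~\ref{thm:OLT}. Specifically, I plan to show (i) $\ldim(\F) = \infty$ implies $\F$ is not process learnable, and (ii) $\limsup_{n \to \infty} \seqrad_n(\F) = 0$ implies $\F$ is process learnable. Since those two conditions are equivalent to one another (and to online learnability) by Theorem~\ref{thm:OLT}, this closes the cycle of equivalences and establishes the theorem.

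For direction (i), assume $\ldim(\F) = \infty$. Then Theorem~\ref{thm:ldimlowerbound} applies verbatim: $\valgen_n(\F) \ge 1/8$ for every $n \ge 1$, so $\limsup_{n} \valgen_n(\F) \ge 1/8 > 0$ and $\F$ fails to be process learnable.

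For direction (ii), assume $\seqrad_n(\F) \to 0$. Theorem~\ref{thm:gentorad} bounds $\valgen_n(\F) \le 4\,\seqrad_n(\ell \circ \F)$, so it suffices to show $\seqrad_n(\ell \circ \F) \to 0$ as well. For the $0$-$1$ loss with $\Y=\{\pm 1\}$ and $\F \subseteq \{\pm 1\}^{\X}$ we can write $\ell((x,y),f) = (1 - y f(x))/2$. Substituting into the definition of $\seqrad_n(\z, \ell \circ \F)$ with $(x_t, y_t) = \z_t(\epsilon_{1:t-1})$, the constant $1/2$ contributes zero in expectation (it does not depend on $f$), and the remaining $-y_t f(x_t)\epsilon_t/2$ term can be rewritten using $\tilde\epsilon_t := -\epsilon_t y_t$. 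Since $y_t$ is determined by $\epsilon_{1:t-1}$, the sequence $\tilde\epsilon_{1:n}$ is itself a Rademacher sequence in distribution, and a path-by-path relabeling of the tree yields $\seqrad_n(\ell \circ \F) \le \tfrac12 \seqrad_n(\F)$. This reduction is precisely the content deferred to Appendix~\ref{sec:losstofunction}. Combining with Theorem~\ref{thm:gentorad} gives $\valgen_n(\F) \to 0$, i.e., $\F$ is process learnable (in fact via ERM).

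The main obstacle here is the loss-to-function reduction, but for the $0$-$1$ loss this is essentially free because $\ell$ is affine in $yf(x)$. In the regression setting of Section~\ref{sec:regression} the corresponding step is more delicate and requires a contraction-style argument; for the present theorem the heavy lifting has all been done in Theorems~\ref{thm:ldimlowerbound} and~\ref{thm:gentorad}, and the proof is mostly an assembly of those two results with the routine loss-class reduction.
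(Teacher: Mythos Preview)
Your proof is correct and follows essentially the same approach as the paper's own proof: it invokes Theorem~\ref{thm:ldimlowerbound} for one direction and combines Theorem~\ref{thm:gentorad} with the $0$-$1$ loss reduction $\seqrad_n(\ell \circ \F) = \tfrac{1}{2}\,\seqrad_n(\F)$ from Appendix~\ref{sec:losstofunction} for the other. The only cosmetic difference is that the paper states the loss-class reduction as an equality (Theorem~\ref{thm:radzeroone}) rather than the one-sided inequality you sketch, but your inequality is all that is needed here.
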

\begin{proof}
Theorem~\ref{thm:ldimlowerbound} established that learnability in the general stochastic process setting implies online learnability. For the other direction, note that according to Theorem~\ref{thm:gentorad} we have
\[
\valgen_n(\F) \le 4 \, \seqrad_n(\ell \circ \F) 
\le
2 \, \seqrad_n(\F) \ ,
\]
where the second inequality follows from Theorem~\ref{thm:radzeroone} in Appendix~\ref{sec:losstofunction}.
Taking $\lim \sup$ of both sides as $n$ tends to infinity shows that online learnability implies \newedit{process learnability}.
\end{proof}

Although online learnability turns out to be equivalent to \newedit{process learnability}, there is an important difference between the two settings which has to do with the importance of ERM. In the former ERM is not a good learning rule whereas in the latter a learnable class is learnable via ERM. Therefore ERM continues to play a special role in the general stochastic process setting just like the iid setting.

\newedit{Also note that Theorem~\ref{thm:general} is stated in terms of learnability which is an asymptotic concept. However, the proof clearly shows that the rate of convergence is determined by the sequential Rademacher complexity of $\F$ which scales as $O\left(\sqrt{{\ldim(\F)}/{n}}\right)$~\citep{alon2021adversarial}}

\subsection{Examples}

We end this section with some examples showing that our definition of \newedit{process learnability} is natural, interesting and worth studying.

\textbf{IID Sampling.} Let us note once again that if $\pp = P \otimes P \otimes \ldots \otimes P$ is a product measure then $\ell(P_t, f)$ is just $\ell(P, f)$ and therefore not random. In this special but important case, our definition of learnability reduces to the standard definition of learnability under iid sampling.

\textbf{Asymptotically Stationary Process.} Suppose that $\pp$ is not a product measure but the process is {\em asymptotically stationary} in the sense that the random probability measure $\Pave = \frac{1}{n} \sum_{t=1}^n P_t$ converges to some fixed deterministic $P^\star$ in total variation $\|\cdot\|_{TV}$ as $n \to \infty$. For a class $\F$ that is learnable in the general stochastic process setting and for loss function bounded by $1$, we have
\begin{align*}
&\quad
\E{ \ell(P^\star, \fhaterm_n)} - \inf_{f \in \F} \ell(P^\star, f) \\
&=
\E{ \ell(P^\star, \fhaterm_n) - \ell(P^\star, f_{P^\star})} \\
&\le
2 \, \E{ \sup_{f \in \F} |
\ell(P^\star, f) - \ell(\Pave, f)
| } \\
&\quad + \E{
\ell(\Pave, \fhaterm_n) - \ell(\Pave, f_{P^\star}) } \\
&\le
2 \, \E{ \|P^\star - \Pave\|_{TV} } \\
&\quad + \E{
\ell(\Pave, \fhaterm_n) - \inf_{f \in \F} \ell(\Pave, f) 
} .
\end{align*}
By the stationarity assumption, the first term on the right in the last inequality goes to zero. \newedit{Moreover, the rate of convergence can often be characterized in terms of the mixing coefficients of the stochastic process~\citep{vidyasagar2002theory}.} By learnability of $\F$ via ERM in the general stochastic process setting, the last term goes to zero. Note that $\ell(P, f)$ is linear in $P$ and therefore $\ell(\Pave, f) = \frac{1}{n} \sum_{t=1}^n \ell(P_t, f)$. So, under stationarity, our learnability condition implies that ERM does well when its performance is measured under the (asymptotic) stationary distribution $P^\star$.

\color{black}
\textbf{Mixture of IID.} Consider a simple mixture of product distributions
\[
\pp = \lambda P \otimes P \otimes \ldots \otimes P +
(1-\lambda) Q \otimes Q \otimes \ldots \otimes Q
\]
where, for simplicity, assume that $P$ and $Q$ have disjoint supports. Then with probability $\lambda$ we have $\forall t>1$ $P_t = P$, and with probability $1-\lambda$ we have $\forall t>1$ $P_t = Q$. Therefore, the minimizer of
\begin{align}
\label{eq:perfmixture}
\frac{1}{n} \sum_{t=1}^n \ell(P_t, f)
\end{align}
is $\fstar_{P}$ with probability $\lambda$ and $\fstar_{Q}$ with probability $1-\lambda$ (assuming, again for simplicity, that the minimizers $\fstar_{P}, \fstar_{Q}$ are unique). Here, unlike the iid and stationary examples, the ``best'' function, even with infinite data, is not deterministic but is random depending on which mixture component was selected. Still, if learnability in our general sense holds, then ERM will do well according to the performance measure~\eqref{eq:perfmixture}. Note that this example can be easily generalized to a mixture of more than two iid processes. It can also be generalized, with additional technical conditions, to the case when $P \neq Q$. The main difference from the disjoint support case that we consider here will be that with probability $\lambda$, $P_t$ will converge to $P$ (in a suitable sense) and to $Q$ otherwise. Similarly, the minimizer of~\eqref{eq:perfmixture} would not equal $\fstar_P$ or $\fstar_Q$ but it would converge (again, in some appropriate sense determined by technical conditions) to one of them with probability $\lambda$ and $1-\lambda$ respectively.
\color{black}

\textbf{Random Level.} Fix the squared loss $\ell(z,f) = (y-f(x))^2$ and consider a class $\F$ that is iid learnable and closed under translations by a constant, i.e., if $f \in \F$ then $f + c \in \F$ for any constant $c \in \reals$. Let $X_{1:n}$ be iid drawn from some distribution $P_X$ on $\X \subseteq \reals^d$ that has a density with respect to Lebesgue measure on $\reals^d$. Let $Y_t = f^\star(X_t) + \xi_t + \xi_0$ for some $f^\star \in \F$ and $1 \le t \le n$ where $(\xi_t)_{t=0}^n$ are iid standard normal. Note that the process $Z_t = (X_t,Y_t), 1 \le t \le n$ is {\em not} iid. It is not even mixing in any sense due to long range dependence in $Y_t$ caused by $\xi_0$.
Now ERM over $\F$ is given by:
\begin{align*}
\fhaterm_n(Z_{1:n})
&= \argmin_{f \in \F} \frac{1}{n}
\sum_{t=1}^n (f^\star(X_t) + \xi_0 + \xi_t - f(X_t))^2 \\
&= \argmin_{f \in \F} \frac{1}{n}
\sum_{t=1}^n (f^\star(X_t)  + \xi_t - (f(X_t)-\xi_0))^2 \\
&= \xi_0 + \argmin_{g \in \F-\xi_0} \frac{1}{n}
\sum_{t=1}^n (f^\star(X_t) + \xi_t - g(X_t))^2 \\
&= \xi_0 + \argmin_{g \in \F} \frac{1}{n}
\sum_{t=1}^n (f^\star(X_t) + \xi_t - g(X_t))^2 ,
\end{align*}
where the last equality holds because $\F - \xi_0 = \F$ and we have assumed that all empirical minimizers are unique with probability $1$. Thus, we have shown that
\[
\fhaterm_n(Z_{1:n}) = \fhaterm_n((X_t,f^\star(X_t) + \xi_t)_{t=1}^n) + \xi_0 .
\]
Since $\F$ is iid learnable, $\fhaterm_n((X_t,f^\star(X_t) + \xi_t)_{t=1}^n)$ converges (in $L_2(P_X)$ sense) to the function $f^\star$ which means the ERM on $Z_{1:n}$ converges to the {\em random} function $f^\star + \xi_0$. 

Next we compute $\ell(P_t,f)$, as follows.
Let $P'_t$ be the conditional distribution of $Z_t$ given $X_{1:t-1}$ and $\xi_{0:t-1}$. Then we have
\begin{align*}
\ell(P'_t, f) &=
\E{ (Y_t - f(X_t))^2 | X_{1:t-1}, \xi_{0:t-1} } \\
&= \E{
(f^\star(X_t) + \xi_t + \xi_0 - f(X_t))^2 |
 X_{1:t-1}, \xi_{0:t-1}
} \\
& = \E{
(f^\star(X_t) + \xi_t + \xi_0 - f(X_t))^2 |
 \xi_{0}
} \\
&= 1 + \| f^\star - f + \xi_0  \|^2_{L_2(P_X)}
\end{align*}
(with $\xi_0$ regarded as fixed).
Then $\ell(P_t, f) = \E{\ell(P'_t) | Z_{1:t-1}} = 1 + \E{\| f^\star  - f + \xi_0 \|^2_{L_2(P_X)}| Z_{1:t-1}}$, where now $\xi_0$ (only) is regarded as random.  It is easy to show that the distribution of $\xi_0$, given $Z_{1:t-1}$, is normal with variance $1/t$ and mean 
$$U_{t-1} = \frac{\sum_{i=1}^{t-1}(Y_i-f^*(X_i))}{t}.$$
Consequently 
$$\ell(P_t,f)= 
1 + \frac 1 t + \left\| f^\star  - f + U_{t-1}\right\|^2_{L_2(P_X)}.$$
In particular, 
$$\inf_{f\in \F}\frac 1 n \sum_{t=1}^n \ell(P_t,f) \geq 1.$$

Now, with $\fhaterm_n = \fhaterm(Z_{1:n})$, consider $\ell(P_t,\fhaterm_n)$.  We have shown $f^*-\fhaterm_n \rightarrow -\xi_0$ in mean square.  Also, 
$$U_{t-1} = \xi_0 + \frac 1 t \left(\sum_{i=1}^{t-1}{\xi_i}-\xi_0\right)\rightarrow \xi_0$$ 
in mean square.  
So 
$\frac 1 n \sum_{t=1}^n \ell(P_t,\fhaterm_n) \rightarrow 1$, the smallest possible value.
That is, asymptotically the minimiser of $\frac{1}{T} \sum_{t=1}^T \ell(P_t, f)$
over $\F$ is $\fhaterm_T$ (which converges, not to $\fstar$, but to the random function $\fstar + \xi_0$).

\section{A Prequential Definition of Learnability}
\label{sec:prequential}

The previous section generalized the statistical setting to include non-product distributions and extended the definition of learnability to a more general setting. In this section we will generalize the online learnability definition to obtain a prequential version of learnability, while still keeping the level of generality of the previous section. As in the online setting, consider a sequence of learning rules $\fhat_{0:n-1}$, where $\fhat_t$ is a function only of $Z_{1:t-1}$, i.e. it cannot peek ahead to access $Z_{t:n}$. Unlike the online learning setting, $Z_{1:t}$ is a random sequence drawn from some general distribution $\pp$ over $\Z^n$. Now, define the minimax value
\[
\valpreq_n(\F) = \inf_{\fhat_{0:n-1}} \valpreq_n(\fhat_{0:n-1}, \F) ,
\]
where
\[
\valpreq_n(\fhat_{0:n-1}, \F) = \sup_{\pp} 
\E{
\frac{1}{n} \sum_{t=1}^n \ell(P_t, \fhat_{t-1}) - \inf_{f \in \F} \frac{1}{n} \sum_{t=1}^n \ell(P_t, f) 
}.
\]
Note that the expectation above is with respect to both $\pp$ and any internal randomness used by the rules $\fhat_{0:n-1}$. As before, the definition of the minimax value leads to the definition of learnability.
\begin{definition}
We say that $\F$ is \newedit{prequentially learnable} if
\[
\limsup_{n \to \infty} \valpreq_n(\F) = 0 .
\]
\end{definition}

The definition of $\valpreq_n(\F)$ can be obtained from the definition of $\valgen_n(\F)$ by replacing $\fhat_n$, which depends on the entire sequence $Z_{1:n}$, by $\fhat_{t-1}$, which depends only on $Z_{1:t-1}$, in the loss term that involves $P_t$. It can also be thought as a generalization of $\valonline$ because of the following. When the distribution $\pp$ degenerates to a point mass at a specific sequence $z_{1:n}$ then $P_t$ becomes a point mass at $z_t$ and the difference of cumulative losses above reduces to the individual sequence regret of $\fhat_{0:n-1}$ on $z_{1:n}$. This observation immediately gives us the following result.

\begin{lemma}
\label{lem:preqvsonline}
Fix any loss function $\ell$ and function class $\F$. Then we have $\valpreq_n(\F) \ge \valonline_n(\F)$.
\end{lemma}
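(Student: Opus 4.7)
The plan is to lower-bound $\valpreq_n(\F)$ by restricting the supremum defining it to a subfamily of distributions $\pp$ that already appear implicitly in the definition of $\valonline_n(\F)$, namely the point masses on individual sequences. Since a supremum over a larger family dominates a supremum over a smaller one, this restriction produces a valid lower bound for every prequential rule sequence $\fhat_{0:n-1}$, and then taking infima on both sides preserves the inequality.

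Concretely, I would fix any prequential rule sequence $\fhat_{0:n-1}$ and any $z_{1:n} \in \Z^n$, and take $\pp = \delta_{z_{1:n}}$, the Dirac measure on that single sequence. Under this $\pp$, the conditional distribution $P_t(\cdot;z_{1:t-1}) = \pp(\cdot \mid Z_{1:t-1}=z_{1:t-1})$ is just the point mass at $z_t$, and hence $\ell(P_t, f) = \ell(z_t, f)$ for every $f \in \F$ and every $t$. Substituting into the expression defining $\valpreq_n(\fhat_{0:n-1},\F)$, the only remaining randomness is the internal randomization of the rules $\fhat_{0:n-1}$, and the expression collapses to exactly $\pregreto(\fhat_{0:n-1},z_{1:n})$ as defined in Section~\ref{sec:online}.

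Taking the supremum over $z_{1:n} \in \Z^n$, which is still a lower bound on the supremum over all $\pp$ in the definition of $\valpreq_n(\fhat_{0:n-1},\F)$, we obtain
\[
\valpreq_n(\fhat_{0:n-1},\F) \;\ge\; \sup_{z_{1:n} \in \Z^n} \pregreto(\fhat_{0:n-1},z_{1:n}) \;=\; \valonline(\fhat_{0:n-1},\F).
\]
Finally, taking the infimum over $\fhat_{0:n-1}$ on both sides yields $\valpreq_n(\F) \ge \valonline_n(\F)$, as required.

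There is no real obstacle here, since the argument is essentially the formal verification of the observation made in the paragraph preceding the lemma. The one point that needs a brief check is that the class of sequences $\fhat_{0:n-1}$ ranged over in the two infima is the same, i.e. prequential rules where $\fhat_{t-1}$ depends only on $Z_{1:t-1}$; this matches by construction of $\valpreq_n(\F)$, so the infimum transfers without alteration.
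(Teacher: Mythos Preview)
Your proposal is correct and follows exactly the approach the paper indicates: restrict the supremum over $\pp$ to Dirac masses $\delta_{z_{1:n}}$, so that $P_t$ becomes the point mass at $z_t$ and the prequential criterion reduces to the individual-sequence regret $\pregreto(\fhat_{0:n-1},z_{1:n})$. Your additional remark that the infimum is over the same class of rule sequences in both definitions is a welcome clarification.
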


The lemma above says that \newedit{prequential learnability is at least as hard as online learnability}. Our next lemma provides a converse result.

\begin{lemma}
\label{lem:preqvsonline2}
Fix any loss function and function class $\F$. Then for any sequence $\fhat_{0:n-1}$ of learning rules we have
\[
\valpreq_n(\fhat_{0:n-1}, \F) 
\le
\valonline_n(\fhat_{0:n-1}, \F)
+
2 \seqrad_n(\ell \circ \F) .
\]
This also means that
\[
\valpreq_n(\F) 
\le
\valonline_n(\F)
+
2 \seqrad_n(\ell \circ \F) .
\]
\end{lemma}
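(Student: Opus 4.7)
The plan is to decompose the prequential regret into three summands: a mean-zero martingale term, the individual-sequence online regret on the realized path, and a uniform deviation between empirical averages and conditional expectations. The first term vanishes thanks to the prequential constraint, the second is controlled by $\valonline_n$ pathwise, and the third is a standard sequential Rademacher bound. This mirrors the proof of Theorem~\ref{thm:gentorad}, except that here the prequential property kills one of the two uniform deviation terms, which is precisely why the bound comes out to $2\,\seqrad_n(\ell \circ \F)$ rather than $4\,\seqrad_n(\ell \circ \F)$.

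Concretely, I would add and subtract $\tfrac{1}{n}\sum_t \ell(Z_t,\fhat_{t-1})$ and $\inf_{f\in\F}\tfrac{1}{n}\sum_t \ell(Z_t,f)$ to write
\begin{align*}
&\frac{1}{n}\sum_{t=1}^n \ell(P_t,\fhat_{t-1}) - \inf_{f \in \F}\frac{1}{n}\sum_{t=1}^n \ell(P_t,f) \\
&\quad = \frac{1}{n}\sum_{t=1}^n \bigl(\ell(P_t,\fhat_{t-1}) - \ell(Z_t,\fhat_{t-1})\bigr) \\
&\qquad + \left[\frac{1}{n}\sum_{t=1}^n \ell(Z_t,\fhat_{t-1}) - \inf_{f \in \F}\frac{1}{n}\sum_{t=1}^n \ell(Z_t,f)\right] \\
&\qquad + \left[\inf_{f \in \F}\frac{1}{n}\sum_{t=1}^n \ell(Z_t,f) - \inf_{f \in \F}\frac{1}{n}\sum_{t=1}^n \ell(P_t,f)\right].
\end{align*}

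The crux is the prequential constraint: since $\fhat_{t-1}$ is a function only of $Z_{1:t-1}$ (and internal randomness independent of $Z_{t:n}$), and since $P_t$ is by definition the conditional distribution of $Z_t$ given $Z_{1:t-1}$, we have $\E{\ell(Z_t,\fhat_{t-1}) \mid Z_{1:t-1}} = \ell(P_t,\fhat_{t-1})$. Hence the first summand is an average of martingale differences and its expectation is zero. The second summand, after taking expectation over the learning rule's internal randomness, equals $\pregreto(\fhat_{0:n-1},Z_{1:n})$, which is at most $\valonline_n(\fhat_{0:n-1},\F)$ pointwise in $Z_{1:n}$; the outer expectation over $Z_{1:n}$ preserves this bound. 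The third summand is dominated by $\sup_{f \in \F}\tfrac{1}{n}\sum_t(\ell(Z_t,f) - \ell(P_t,f))$, whose expectation is at most $2\,\seqrad_n(\ell \circ \F)$ by Theorem~2 of \citet{rakhlin2015sequential}, exactly as used in the proof of Theorem~\ref{thm:gentorad}.

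Summing the three bounds and taking the supremum over $\pp$ gives the first inequality; taking the infimum over $\fhat_{0:n-1}$ on both sides yields the second, since $\seqrad_n(\ell \circ \F)$ does not depend on the learning rule. The main obstacle I anticipate is a clean handling of the randomized rules in the martingale step: one must set up a joint probability space carrying $Z_{1:n}$ together with an independent source of internal randomness, verify that $\fhat_{t-1}$ is measurable with respect to the natural filtration at time $t-1$, and then invoke the tower property. Everything else is bookkeeping.
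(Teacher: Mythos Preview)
Your proposal is correct and follows essentially the same approach as the paper: the identical three-term decomposition into a martingale difference, the individual-sequence online regret, and a uniform deviation bounded via Theorem~2 of \citet{rakhlin2015sequential}. Your additional remarks on handling the internal randomization of the learning rules are a welcome elaboration but do not change the structure of the argument.
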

\begin{proof}
Let $\pp$ be an arbitrary distribution. We have the following three term decomposition:
\begin{align*}
& \quad \frac{1}{n} \sum_{t=1}^n \ell(P_t, \fhat_{t-1})
- \inf_{f \in \F} \frac{1}{n} \sum_{t=1}^n \ell(P_t, f) \\
&= \underset{(I)}{\underbrace{
\frac{1}{n} \sum_{t=1}^n \ell(P_t, \fhat_{t-1})
- \frac{1}{n} \sum_{t=1}^n \ell(Z_t, \fhat_{t-1})
}} \\
&\quad + \underset{(II)}{\underbrace{
\frac{1}{n} \sum_{t=1}^n \ell(Z_t, \fhat_{t-1})
- \inf_{f \in \F} \frac{1}{n} \sum_{t=1}^n \ell(Z_t, f)
}} \\
&\quad + \underset{(III)}{\underbrace{
\inf_{f \in \F} \frac{1}{n} \sum_{t=1}^n \ell(Z_t, f)
- \inf_{f \in \F} \frac{1}{n} \sum_{t=1}^n \ell(P_t, f)
}}.
\end{align*}
The term $(I)$ involves a martingale difference sequence $\ell(P_t, \fhat_{t-1}) - \ell(Z_t, \fhat_{t-1})$ and hence has expectation zero under $\pp$. Term $(II)$ is the individual sequence regret of $\fhat_{0:n-1}$ on the sequence $Z_{1:n}$ and hence is bounded, in expectation, by $\valonline_n(\fhat_{0:n-1}, \F)$.
Term $(III)$, in expectation, is at most,
\[
\E{
\sup_{f \in \F} 
\frac{1}{n} \sum_{t=1}^n \left(
\ell(Z_t, f) -  \ell(P_t, f)
\right)
}
\le 2 \seqrad_n(\ell \circ \F) ,
\]
where the inequality again follows from Theorem 2 of \citet{rakhlin2015sequential}.

The lemma now follows by taking expectations on both sides of the three term decomposition above and plugging in the upper bounds for each term's expected value.
\end{proof}

We now have all the ingredients to characterize \newedit{prequential learnability} for binary classification.

\begin{theorem}\label{thm:preq}
Consider binary classification with $0$-$1$ loss. Then all of the conditions in Theorem~\ref{thm:OLT} are also equivalent to:
\begin{itemize}
    \item $\F$ is \newedit{prequentially learnable.}
\end{itemize}
\end{theorem}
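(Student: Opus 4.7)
The plan is to piece together Lemma~\ref{lem:preqvsonline} and Lemma~\ref{lem:preqvsonline2} with Theorem~\ref{thm:OLT}, mirroring the structure of the proof of Theorem~\ref{thm:general}. Since Theorem~\ref{thm:OLT} already provides a web of equivalences (online learnability $\Leftrightarrow$ UMLLN $\Leftrightarrow$ $\ldim(\F) < \infty$ $\Leftrightarrow$ $\limsup_n \seqrad_n(\F) = 0$), it suffices to show that prequential learnability is equivalent to any \emph{one} of these, and the most convenient target is online learnability itself.

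For the direction ``prequentially learnable implies online learnable,'' I would invoke Lemma~\ref{lem:preqvsonline} directly. Taking $\limsup_{n\to\infty}$ on both sides of $\valpreq_n(\F) \ge \valonline_n(\F)$ immediately converts a vanishing prequential value into a vanishing online value, so this direction is essentially free.

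For the reverse direction, ``online learnable implies prequentially learnable,'' I would use Lemma~\ref{lem:preqvsonline2}, which gives
\[
\valpreq_n(\F) \;\le\; \valonline_n(\F) + 2\,\seqrad_n(\ell \circ \F).
\]
Assuming online learnability, the first term vanishes as $n \to \infty$ by definition. For the second term, I would appeal to Theorem~\ref{thm:radzeroone} from Appendix~\ref{sec:losstofunction} (the same auxiliary bound used inside the proof of Theorem~\ref{thm:general}), which compares the sequential Rademacher complexity of the $0$-$1$ loss class $\ell \circ \F$ to that of $\F$ itself, yielding $\seqrad_n(\ell \circ \F) \le \tfrac{1}{2}\seqrad_n(\F)$ (up to the constant reported there). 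Since online learnability is equivalent to $\limsup_n \seqrad_n(\F) = 0$ by Theorem~\ref{thm:OLT}, the second term also vanishes, giving $\limsup_n \valpreq_n(\F) = 0$.

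There is really no obstacle here beyond bookkeeping: the two lemmas were designed precisely to sandwich $\valpreq_n(\F)$ between $\valonline_n(\F)$ and $\valonline_n(\F) + 2\seqrad_n(\ell \circ \F)$, and the $0$-$1$ loss reduction to $\seqrad_n(\F)$ is the same step already used in Theorem~\ref{thm:general}. If anything deserves care, it is simply the observation that in the upper bound of Lemma~\ref{lem:preqvsonline2} the two terms vanish for the \emph{same} reason (finite Littlestone dimension), so one does not need any additional assumption on the rates; this also makes clear, as remarked after Theorem~\ref{thm:general}, that the convergence rate of $\valpreq_n(\F)$ inherits the $O(\sqrt{\ldim(\F)/n})$ scaling from the sequential Rademacher complexity.
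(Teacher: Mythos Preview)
Your proposal is correct and follows essentially the same approach as the paper's own proof: one direction via Lemma~\ref{lem:preqvsonline}, the other via Lemma~\ref{lem:preqvsonline2} combined with Theorem~\ref{thm:radzeroone} to pass from $\seqrad_n(\ell \circ \F)$ to $\seqrad_n(\F)$, and then Theorem~\ref{thm:OLT} to conclude. The only cosmetic difference is that the paper writes the chained inequality $\valpreq_n(\F) \le \valonline_n(\F) + \seqrad_n(\F)$ explicitly before taking the $\limsup$.
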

\begin{proof}
From Lemma~\ref{lem:preqvsonline}, we know that if a class is \newedit{prequentially learnable} then it is online learnable. In the other direction, using Lemma~\ref{lem:preqvsonline2}, we have
\begin{align*}
\valpreq_n(\F) 
&\le
\valonline_n(\F)
+
2 \seqrad_n(\ell \circ \F) \\
&\le
\valonline_n(\F)
+
\seqrad_n(\F) ,
\end{align*}
where the second inequality follows from Theorem~\ref{thm:radzeroone} in Appendix~\ref{sec:losstofunction}.
Under any of the equivalent conditions in Theorem~\ref{thm:OLT}, the $\lim\sup$ of both of the quantities on the right goes to zero as $n$ tends to infinity.
\end{proof}

\newedit{A similar result for the regression setting can be found in Section~\ref{sec:reg_prequential}.}

\section{The Regression Setting}
\label{sec:regression}

In this section, we provide analogues of most of the binary classification results for the regression setting with absolute loss. Note that rates of convergence can depend on the loss function but \emph{learnability} is quite robust to changes in the loss function. For example, we can also use squared loss $\ell((x,y), f) = (y - f(x))^2$. But we will keep our focus on the absolute loss in this section.

Our organization in this section is similar to the organization of results for binary classification. Section~\ref{sec:reg_statistical} and Section~\ref{sec:reg_online} review known results in iid and online learning, but give them a unified presentation. Section~\ref{sec:reg_general} and Section~\ref{sec:reg_prequential} present new results.

\subsection{Statistical Learning}
\label{sec:reg_statistical}

The fat shattering dimension of $\F$ is a scale-sensitive parameter that takes a scale $\gamma > 0$ as an argument. The fat shattering dimension of $\F$ at scale $\gamma$, denoted by $\fatgamma(\F)$, is the length $n$ of the longest sequence $x_{1:n}$ that is $\gamma$-shattered by $\F$. We say that a sequence $x_{1:n}$ is $\gamma$-shattered by $\F$ if there exists a witness sequence $s_{1:n}$ of real numbers such that
\[
\forall \epsilon_{1:n} \in \{\pm1\}^n, \exists f \in \F, \text{ s.t. } \forall t \in [n], \epsilon_t ( f(x_t) - s_t ) \ge \gamma .
\]

\begin{theorem}\label{thm:reg_SLT}
Consider regression with absolute loss in the iid statistical setting. Then, the following are equivalent:
\begin{enumerate}
    \item $\F$ is learnable.
    \item $\F$ is learnable via ERM.
    \item The ULLN condition~\eqref{eq:ULLN} holds for $\F$.
    \item $\forall \gamma > 0, \fatgamma(\F) < \infty$. \label{fatcond}
    \item $\limsup_{n \to \infty} \rad_n(\F) = 0$.
\end{enumerate}
\end{theorem}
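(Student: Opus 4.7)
The plan is to establish the cycle of implications $(5) \Rightarrow (3) \Rightarrow (2) \Rightarrow (1) \Rightarrow (4) \Rightarrow (5)$, paralleling the binary-classification argument underlying Theorem~\ref{thm:SLT} but replacing VC dimension with fat shattering dimension and using scale-sensitive combinatorial tools.

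The ``upper'' portion of the cycle follows standard symmetrization and contraction arguments. $(5) \Rightarrow (3)$ is the usual ghost-sample symmetrization, giving $\E{\sup_{f \in \F}|\tfrac{1}{n}\sum_{t=1}^{n} f(X_t) - Pf|} \le 2\, \rad_n(P, \F)$; taking $\sup_P$ and $\limsup_n$ yields ULLN. For $(3) \Rightarrow (2)$, since $\ell((x,y), f) = |y - f(x)|$ is $1$-Lipschitz in $f(x)$ and $\Y = [-1,+1]$, $L_1$ covers of $\F$ induce $L_1$ covers of $\ell \circ \F$ of the same size; by the Vapnik-type characterization of ULLN via covering numbers, ULLN for $\F$ transfers to ULLN for $\ell \circ \F$. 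Combined with the standard inequality $\ell(P, \fhaterm_n) - \inf_{f \in \F} \ell(P, f) \le 2 \sup_{f \in \F}|\tfrac{1}{n}\sum_t \ell(Z_t, f) - \ell(P, f)|$, this gives consistency of ERM. $(2) \Rightarrow (1)$ is immediate.

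For $(1) \Rightarrow (4)$ I argue by contraposition via a no-free-lunch construction. Suppose $\fatgamma(\F) = \infty$ at some $\gamma > 0$; then for every $n$ there exist $2n$ points $x_{1:2n}$ that are $\gamma$-shattered with witness $s_{1:2n}$. Consider the distribution that draws $X$ uniformly on these $2n$ points and, given a sign pattern $\sigma \in \{\pm 1\}^{2n}$, sets $Y = s_X + \gamma \sigma_X$. Averaging over $\sigma$ uniform on $\{\pm 1\}^{2n}$, any learner trained on $n$ iid samples fails to see at least a constant fraction of the $2n$ points (by a coupon-collector-type calculation), and on each unseen $x_t$ the posterior over $\sigma_t$ remains uniform, forcing expected absolute loss of at least $\gamma$ there. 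The shattering property simultaneously provides a comparator $f_\sigma \in \F$ of excess loss at most $O(\gamma)$ on the support (after pruning to the margin if necessary). This yields $\valiid_n(\F) = \Omega(\gamma)$ uniformly in $n$, contradicting learnability.

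The step $(4) \Rightarrow (5)$ is the most technical and is the main obstacle. I invoke the covering-number bound of Alon--Ben-David--Cesa-Bianchi--Haussler, $\log \mathcal{N}_2(\gamma, \F, n) \le C\, \fat_{c\gamma}(\F)\, \log^2(n/\gamma)$, valid for any uniformly bounded class $\F$. Plugging this into Dudley's entropy integral yields
\[
\rad_n(\F) \le \inf_{\alpha > 0} \left\{ 2\alpha + \frac{12}{\sqrt{n}} \int_\alpha^1 \sqrt{\log \mathcal{N}_2(\gamma, \F, n)}\, d\gamma \right\}.
\]
Under condition~(4) the integrand is finite at each fixed scale $\gamma > 0$, so choosing a truncation $\alpha_n \to 0$ slowly (for example $\alpha_n = n^{-1/4}$) drives both terms to zero. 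The delicate part is ensuring that $\alpha_n$ can be chosen so that the growth of $\fat_{c\gamma}(\F)$ as $\gamma \downarrow 0$ does not spoil the integral; the covering-number bound itself is taken as a black box.
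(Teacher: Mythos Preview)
The paper does not prove this theorem at all: it simply records it as a known result, attributing the equivalence of conditions (1)--(4) to \citet{alon1997scale} and the link with Rademacher complexity (condition (5)) to \citet{mendelson2002rademacher}. Your cycle $(5)\Rightarrow(3)\Rightarrow(2)\Rightarrow(1)\Rightarrow(4)\Rightarrow(5)$ is exactly the standard route taken in those references, so in that sense you are aligned with what the paper implicitly relies on.

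Two places in your sketch need more care, however. First, in $(1)\Rightarrow(4)$, the claim that the shattering witness $f_\sigma$ has ``excess loss at most $O(\gamma)$'' is not justified by fat shattering alone: the definition guarantees $\sigma_t(f_\sigma(x_t)-s_t)\ge\gamma$, a \emph{one-sided} margin, but places no upper bound on $|f_\sigma(x_t)-s_t|$, so $|f_\sigma(x_t)-Y_t|$ can be as large as $2-\gamma$. Your parenthetical ``after pruning to the margin if necessary'' does not fix this, since $\F$ need not contain the pruned function. The published lower bounds (e.g., Bartlett--Long--Williamson, or Anthony--Bartlett, Theorem~19.3) avoid this by using randomized labels $Y\in\{s_X-\gamma,s_X+\gamma\}$ with bias $\tfrac12\pm\alpha\sigma_X$ and comparing the learner's loss to that of $f_\sigma$ directly (not to zero), so that the unbounded part cancels and only the $\Theta(\alpha\gamma)$ advantage from being on the correct side survives. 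Second, in $(4)\Rightarrow(5)$, your specific choice $\alpha_n=n^{-1/4}$ can fail: if $\fat_{c\gamma}(\F)$ grows faster than any polynomial in $1/\gamma$ (which condition (4) permits), the integral from $\alpha_n$ to $1$ need not be $o(\sqrt{n})$. The clean argument is the one the paper itself uses in Theorem~\ref{thm:radabsolute}: fix $\alpha>0$, take $\limsup_n$ (the integral term vanishes since the integrand is bounded on $[\alpha,1]$), then let $\alpha\downarrow 0$.
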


The first four conditions are proved to be equivalent by~\citet{alon1997scale}. For connections between fat shattering dimension and Rademacher complexity see the work of~\citet{mendelson2002rademacher}.

\subsection{Online Setting}
\label{sec:reg_online}

The fat shattering dimension of $\F$ is replaced by its sequential analogue, just as VC dimension gets replaced by Littlestone dimension in the case of binary classification. The sequential fat shattering dimension of $\F$ at scale $\gamma$, denoted by $\sfatgamma(\F)$, is the depth $n$ of the deepest tree $\x$ that is $\gamma$-shattered by $\F$. We say that a complete binary tree $\x$ is $\gamma$-shattered by $\F$ if there exists a complete binary real valued witness tree $\s$ such that
\begin{equation*}
\forall \epsilon_{1:n} \in \{\pm1\}^n, \exists f \in \F, \text{ s.t. } \forall t \in [n], 
\epsilon_t( f(\x_t(\epsilon_{1:t-1}) - \s_t(\epsilon_{1:t-1}) ) \ge \gamma .
\end{equation*}

\begin{theorem}\label{thm:reg_OLT}
Consider regression with absolute loss in the online (individual sequence) setting. Then, the following are equivalent:
\begin{enumerate}
    \item $\F$ is learnable.
    \item The UMLLN condition~\eqref{eq:UMLLN} holds for $\F$.
    \item $\forall \gamma > 0, \sfatgamma(\F) < \infty$. \label{sfatcond}
    \item $\limsup_{n \to \infty} \seqrad_n(\F) = 0$. \label{regseqradcond}
\end{enumerate}
\end{theorem}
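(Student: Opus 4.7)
The plan is to use condition (4), vanishing sequential Rademacher complexity, as the hub and to establish its equivalence with each of (1), (2), (3) separately, following the program of \citet{rakhlin2015online,rakhlin2015sequential}. Most of the work is a mechanical adaptation of the tools underlying the binary-classification case (Theorem~\ref{thm:OLT}), with $1$-Lipschitzness of absolute loss bridging the loss class and the function class.

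The three equivalences are as follows. For (4)~$\Leftrightarrow$~(1): the upper bound $\valonline_n(\F)\lesssim \seqrad_n(\F)$ follows from the randomized algorithm derived from the sequential Rademacher relaxation of \citet{rakhlin2015online}, together with a contraction step $\seqrad_n(\ell\circ\F)\leq \seqrad_n(\F)$ (absolute loss is $1$-Lipschitz in its second argument), while the matching lower bound $\valonline_n(\F)\gtrsim \seqrad_n(\F)$ is the sequential minimax/symmetrization argument of the same reference applied to a worst-case adversarial tree. For (4)~$\Leftrightarrow$~(3): a depth-$n$ $\gamma$-shattered tree with witness $\s$ gives $\seqrad_n(\F)\geq \gamma/2$ directly (evaluate shattering functions against the Rademacher signs), so $\sfat_\gamma(\F)=\infty$ precludes (4); conversely, the sequential Dudley entropy integral
\[
\seqrad_n(\F)\leq \inf_{\alpha\in(0,1)}\left\{4\alpha + \frac{12}{\sqrt n}\int_\alpha^1 \sqrt{\log\mathcal{N}_\infty^{\mathrm{seq}}(\delta,\F,n)}\,d\delta\right\}
\]
combined with the sequential Mendelson--Vershynin bound $\log\mathcal{N}_\infty^{\mathrm{seq}}(\delta,\F,n)\leq c\,\sfat_\delta(\F)\log^2(n/\delta)$ from \citet{rakhlin2015online} yields $\seqrad_n(\F)\to 0$ under (3), upon choosing $\alpha=\alpha(n)\to 0$ slowly enough. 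Finally (4)~$\Leftrightarrow$~(2) is the main result of \citet{rakhlin2015sequential}, a sequential analogue of the Gin\'e--Zinn theorem identifying vanishing sequential Rademacher complexity with the UMLLN holding uniformly in $\pp$ and $\A$; the direction (4)~$\Rightarrow$~(2) is just the sequential symmetrization bound already invoked in this paper.

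The main obstacle is the sequential Mendelson--Vershynin covering number bound: relating sequential fat shattering to sequential $\ell_\infty$-covering numbers on binary trees is a delicate combinatorial argument that extends the classical counting of Mendelson--Vershynin to the tree setting, and I would cite it from \citet{rakhlin2015online} rather than reprove it. Once that bound is in hand, together with the sequential symmetrization inequality and the sequential relaxation-based algorithm, the three equivalences above drop out with only bookkeeping adjustments beyond the binary case.
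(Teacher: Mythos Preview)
Your proposal is correct and aligns with the paper's treatment: the paper does not give a self-contained proof but simply cites \citet{rakhlin2015sequential} for the equivalence of conditions (2), (3), (4) and \citet{rakhlin2015online} for the connection with learnability (condition (1)). Your hub-and-spoke sketch around condition (4) is precisely the content of those references, and your identification of the sequential covering-number bound as the crux is accurate; the only cosmetic differences are that your shattered-tree lower bound actually yields $\seqrad_n(\F)\geq\gamma$ rather than $\gamma/2$, and sequential contraction may carry a harmless constant, neither of which affects the learnability statement.
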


The last three conditions are shown to be equivalent in~\cite{rakhlin2015sequential} and the connection with learnability was established in~\cite{rakhlin2015online}.

As in the binary classification setting, online learnability is harder than iid statistical learnability. That is, for any $\F$ and any $\gamma > 0$, $\fatgamma(\F) \le \sfatgamma(\F)$ and the gap in this inequality can be arbitrarily large. For example, the set $\F_{\text{bv}}$ of bounded variation functions from $[0,1]$ to $[0,1]$ with total variation at most $V$, 
has $\fatgamma(\F_{\text{bv}}) < 1 + V/\gamma$ for all $\gamma > 0$
but $\sfatgamma(\F_{\text{bv}}) = \infty$ for all $\gamma > 0$.

\subsection{Learnability under General Stochastic Processes}
\label{sec:reg_general}

We first state an analogue of Theorem~\ref{thm:ldimlowerbound} for the regression setting.

\begin{theorem}
\label{thm:sfatlowerbound}
Consider regression with absolute loss in the general stochastic process setting. Suppose the class $\F$ is not online learnable, i.e., there exists $\gamma > 0$ such that $\sfatgamma(\F) = \infty$. Then for any $n \ge 1$,
$\valgen_n(\F) \ge \gamma/500$. Therefore, the class $\F$ is not \newedit{process learnable.}
\end{theorem}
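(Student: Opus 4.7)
The plan is to adapt the adversary construction used in the proof of Theorem~\ref{thm:ldimlowerbound} to the regression setting by replacing Littlestone-shattered trees with $\gamma$-shattered trees and rescaling the binary labels from $\pm 1$ to $\s_t \pm \gamma$.

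Given $\sfatgamma(\F) = \infty$, for each $n\ge 1$ I would choose a depth-$n$ complete binary tree $\x$ that is $\gamma$-shattered by $\F$ with witness tree $\s$, and then construct a distribution $\pp$ on $\Z^n$ by drawing random signs $\epsilon_{1:n}$ according to the same joint distribution used in the binary-case proof of Theorem~\ref{thm:ldimlowerbound}, setting $X_t = \x_t(\epsilon_{1:t-1})$ and $Y_t = \s_t(\epsilon_{1:t-1}) + \epsilon_t\gamma$. This is precisely the binary construction with the labels $\pm 1$ replaced by $\s_t \pm \gamma$.

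Under this $\pp$ the absolute-loss landscape scales by a factor of $\gamma$: for any $f\in\F$ with $f(X_t)$ on the ``wrong'' side of $\s_t$ with respect to $\epsilon_t$, one has $|f(X_t)-Y_t|\ge\gamma$, while a function on the correct side near the endpoint $\s_t+\epsilon_t\gamma$ pays no cost. The conditional losses $\ell(P_t,f)$ therefore become $2\gamma$ times analogous binary $0$--$1$ conditional losses for a surrogate classification problem on the tree nodes, whose binary function class inherits infinite Littlestone dimension from the $\gamma$-shattering of $\x$ by $\F$. Applying Theorem~\ref{thm:ldimlowerbound} to the surrogate problem under the same $\pp$ then yields a binary regret of at least $1/8$, which lifts to a regression regret of order $\gamma$.

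The main obstacle will be that the $\gamma$-shattering condition only supplies $\F$ with functions $f_\epsilon$ satisfying $|f_\epsilon(\x_t(\epsilon_{1:t-1}))-\s_t(\epsilon_{1:t-1})|\ge\gamma$ but not equality, and $\F$ may contain additional functions whose values fall strictly inside the interval $(\s_t-\gamma,\s_t+\gamma)$ and hence do not have a clean image under the binary surrogate mapping. Such ``straddling'' functions can reduce $\inf_{f\in\F}\frac{1}{n}\sum_t \ell(P_t,f)$ below the $2\gamma$-scaled binary comparator value and thereby shrink the gap. Uniformly controlling their contribution is the key analytical step, and the loose constant $1/500$ (as opposed to the naive $1/4$) in the statement likely accommodates this slack, which I would handle either by a pigeonhole argument (showing that too many straddling functions at tree nodes would contradict $\gamma$-shattering at a slightly different scale) or by discretizing $\F$ at a resolution comparable to $\gamma$ before invoking the binary reduction.
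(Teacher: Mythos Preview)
Your construction does not yield a lower bound on $\valgen_n(\F)$. The crucial point is that in the general-stochastic-process setting the learner $\fhat_n$ sees the \emph{entire} sample $Z_{1:n}$ before committing, and under your process the sample determines $\epsilon_{1:n}$ completely. Given $Z_{1:t-1}$, the conditional distribution $P_t$ places equal mass on $Y_t=\s_t(\epsilon_{1:t-1})\pm\gamma$ (with $X_t$ deterministic), so a short computation gives $\ell(P_t,f)=\max\bigl(|f(X_t)-\s_t|,\gamma\bigr)$ for every $f$. This is symmetric in $\epsilon_t$: there is no ``correct side'' under $P_t$. Consequently the comparator $\inf_{f\in\F}\tfrac{1}{n}\sum_t\ell(P_t,f)$ and the learner's objective are the \emph{same} optimization over $\F$, both fully determined by the realized path; the learner can output (an approximate) minimizer and achieve regret arbitrarily close to zero. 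Your reasoning implicitly treats $\epsilon_t$ as hidden from the predictor at time $t$, which is the online picture, not the $\valgen$ picture. The ``straddling'' issue you flag is secondary; the missing ingredient is hidden randomness that the comparator can exploit but the sample does not reveal.

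The paper's proof supplies exactly that. It does not work on a $\gamma$-shattered tree at all. Instead it invokes a structural result of \citet{jung2020equivalence} to extract from $\sfatgamma(\F)=\infty$ a family of $N=2^{2^n+1}$ \emph{approximate thresholds} in $\F$: functions $\tf_1,\ldots,\tf_N$, points $x_1,\ldots,x_N$, and levels $u,u'$ with $|u-u'|\ge\gamma/5$ such that $\tf_j(x_i)$ is within $\gamma/100$ of $u$ or $u'$ according as $i\le j$ or $i>j$. It then reuses verbatim the stochastic process from the proof of Theorem~\ref{thm:ldimlowerbound}, replacing the binary labels $Y_t\in\{0,1\}$ by $\tilde Y_t\in\{u',u\}$. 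The hidden bit vector $\bitv$ in that construction guarantees that even after seeing $Z_{1:n}$ the learner cannot predict $f_\bitv(\bitv_t^+)$ on the event $\{\epsilon_t=-1\}$, while the comparator can use $\tf_\bitv$ to achieve loss at most $\gamma/100$. Thresholding $\fhat_n$ at $(u+u')/2$ reduces to the binary lower bound, and the slack constants $\gamma/5$ and $\gamma/100$ combine to give the stated $\gamma/500$.
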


\color{black}
The result above allows us to extend Theorem~\ref{thm:general} to the regression setting.

\begin{theorem}\label{thm:reg_general}
Consider regression with absolute loss with a class $\F$. Then all of the equivalent conditions in Theorem~\ref{thm:reg_OLT} are also equivalent to:
\begin{itemize}
    \item $\F$ is process learnable.
\end{itemize}
\end{theorem}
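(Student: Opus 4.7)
The plan is to mirror the proof of Theorem~\ref{thm:general}, replacing the zero-one tools with their scale-sensitive absolute-loss analogues. Concretely, one direction of the equivalence is supplied directly by Theorem~\ref{thm:sfatlowerbound}: if $\F$ is not online learnable then some $\sfatgamma(\F)=\infty$, so $\valgen_n(\F)\ge \gamma/500$ for all $n$, hence $\F$ is not process learnable. Contrapositively, process learnability implies online learnability (i.e.\ any of the conditions in Theorem~\ref{thm:reg_OLT}).

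For the reverse direction, I would start from Theorem~\ref{thm:gentorad}, which was stated for an \emph{arbitrary} loss function and function class, and therefore applies verbatim to absolute loss. It gives
\[
\valgen_n(\F)\;\le\;\valgen_n(\fhaterm_n,\F)\;\le\;4\,\seqrad_n(\ell\circ\F).
\]
So it suffices to show that $\seqrad_n(\ell\circ\F)\to 0$ whenever any of the conditions of Theorem~\ref{thm:reg_OLT} holds, and in particular whenever $\seqrad_n(\F)\to 0$ (condition~\ref{regseqradcond} of Theorem~\ref{thm:reg_OLT}). Because the absolute loss $\ell((x,y),f)=|y-f(x)|$ is $1$-Lipschitz in the argument $f(x)$, this is the content of the sequential contraction principle for Rademacher complexities (see, e.g., \citet{rakhlin2015sequential}), which yields a bound of the form $\seqrad_n(\ell\circ\F)\le C\,\seqrad_n(\F)$ for an absolute constant $C$; in fact, under absolute loss one can take $C=1$ up to the standard symmetrization of the $y$-coordinate, since the $y$-tree contributes nothing once one conditions and applies Lipschitz contraction. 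Chaining these bounds,
\[
\valgen_n(\F)\;\le\;4\,\seqrad_n(\ell\circ\F)\;\le\;4C\,\seqrad_n(\F),
\]
and taking $\limsup_{n\to\infty}$ shows that online learnability (via condition~\ref{regseqradcond} of Theorem~\ref{thm:reg_OLT}) implies process learnability.

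The main obstacle, and the only place where the proof is not literally the same as that of Theorem~\ref{thm:general}, is the reduction $\seqrad_n(\ell\circ\F)\lesssim \seqrad_n(\F)$. In the binary case this was handled by the appendix result Theorem~\ref{thm:radzeroone}; for absolute loss one instead appeals to the Lipschitz contraction lemma for sequential Rademacher averages. This is a standard ingredient but needs to be invoked carefully because the loss depends on the pair $(x,y)$ while the sequential Rademacher complexity of $\F$ is defined for $\X$-valued trees: one fixes the $y$-labels at each node of a $\Z$-valued tree $\z$, writes $\ell(\z_t(\epsilon_{1:t-1}),f)=|y_t(\epsilon_{1:t-1})-f(x_t(\epsilon_{1:t-1}))|$, and applies contraction coordinatewise along the tree. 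Once that lemma is granted, the theorem follows immediately by combining it with Theorems~\ref{thm:gentorad} and~\ref{thm:sfatlowerbound}, exactly as in the proof of Theorem~\ref{thm:general}.
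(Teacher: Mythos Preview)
Your overall architecture is exactly that of the paper: one direction via Theorem~\ref{thm:sfatlowerbound}, the other via Theorem~\ref{thm:gentorad} followed by a bound on $\seqrad_n(\ell\circ\F)$. The only substantive divergence is in that last step, and there your proposal has a gap worth flagging.

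You appeal to a ``sequential contraction principle'' giving $\seqrad_n(\ell\circ\F)\le C\,\seqrad_n(\F)$ and cite \citet{rakhlin2015sequential} for it. Such a direct Lipschitz contraction inequality for \emph{sequential} Rademacher complexity is not established there, and in fact the classical Ledoux--Talagrand coordinatewise argument breaks down in the sequential setting: once you flip $\epsilon_t$, all later nodes $\x_s(\epsilon_{1:s-1})$ and $\y_s(\epsilon_{1:s-1})$ for $s>t$ change as well, so you cannot isolate a single coordinate while holding the rest fixed. Your parenthetical about ``symmetrization of the $y$-coordinate'' and ``applying contraction coordinatewise along the tree'' glosses over precisely this obstruction. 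This subtlety is the reason the paper does \emph{not} invoke a contraction lemma here: instead it proves a separate appendix result (Theorem~\ref{thm:radabsolute}) showing $\limsup_n \seqrad_n(\ell\circ\F)\le 0$ whenever $\sfatgamma(\F)<\infty$ for all $\gamma>0$, going through the fractional covering numbers and Dudley-type integral bound of \citet{block2021majorizing} together with the Lipschitz property of the absolute loss at the covering-number level rather than at the level of the Rademacher average itself.

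In short, your plan is correct up to the point where you need to pass from $\seqrad_n(\ell\circ\F)$ to a quantity controlled by online learnability of $\F$; replace the claimed contraction lemma by the covering-number argument of Theorem~\ref{thm:radabsolute} (or supply an independent proof of sequential Lipschitz contraction, which is nontrivial) and the proof goes through exactly as you outline.
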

\begin{proof}
Theorem~\ref{thm:sfatlowerbound} shows that process learnability implies online learnability. For the other direction, note that according to Theorem~\ref{thm:gentorad} we have
\[
\valgen_n(\F) \le 4 \, \seqrad_n(\ell \circ \F) .
\]
Taking $\lim \sup$ of both sides as $n$ tends to infinity and using Theorem~\ref{thm:radabsolute} in Appendix~\ref{sec:losstofunction} to bound the right hand side gives the desired implication.
\end{proof}

\subsection{Learnability under the Prequential Version}
\label{sec:reg_prequential}

\begin{theorem}\label{thm:reg_preq}
Consider regression with absolute loss with a class $\F$. Then all of the equivalent condition in Theorem~\ref{thm:reg_OLT} are also equivalent to:
\begin{itemize}
    \item $\F$ is prequentially learnable.
\end{itemize}
\end{theorem}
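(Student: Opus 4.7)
The plan is to carry out exactly the same two-step argument used in Theorem~\ref{thm:preq} for the binary-classification version, exploiting the fact that both Lemma~\ref{lem:preqvsonline} and Lemma~\ref{lem:preqvsonline2} are stated for an \emph{arbitrary} loss function and function class, and hence apply without modification to the absolute-loss regression setting.

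For the easy direction, I would invoke Lemma~\ref{lem:preqvsonline} directly: it gives $\valpreq_n(\F) \ge \valonline_n(\F)$ for every $n$, so if $\limsup_n \valpreq_n(\F) = 0$ then $\F$ is online learnable in the sense of Section~\ref{sec:reg_online} and therefore satisfies every condition of Theorem~\ref{thm:reg_OLT}.

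For the converse, assume $\F$ is online learnable in the regression sense, so that $\limsup_n \valonline_n(\F) = 0$ and, by condition~\ref{regseqradcond} of Theorem~\ref{thm:reg_OLT}, $\limsup_n \seqrad_n(\F) = 0$. Applying Lemma~\ref{lem:preqvsonline2} to the absolute loss yields
\[
\valpreq_n(\F) \le \valonline_n(\F) + 2\, \seqrad_n(\ell \circ \F),
\]
so it suffices to show $\limsup_n \seqrad_n(\ell \circ \F) = 0$. This is exactly the role played by Theorem~\ref{thm:radabsolute} in Appendix~\ref{sec:losstofunction}, which bounds $\seqrad_n(\ell \circ \F)$ in terms of $\seqrad_n(\F)$ (up to a universal constant) by a sequential contraction argument, using that $\ell((x,y),f) = |y-f(x)|$ is $1$-Lipschitz in $f(x)$ and that the additive shift by $y$ leaves the sequential Rademacher complexity unchanged. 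Taking $\limsup$ on both sides of the displayed inequality then gives prequential learnability.

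The main (and essentially only) obstacle is the sequential-contraction step passing from $\seqrad_n(\F)$ to $\seqrad_n(\ell \circ \F)$; but this has already been isolated and imported as a black box from the appendix and was used in precisely the same way in the proofs of Theorem~\ref{thm:reg_general} and Theorem~\ref{thm:preq}, so once it is invoked the result follows immediately.
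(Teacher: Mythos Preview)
Your proposal is correct and follows the paper's proof essentially line for line: Lemma~\ref{lem:preqvsonline} for one direction, Lemma~\ref{lem:preqvsonline2} plus Theorem~\ref{thm:radabsolute} for the other. One small correction: Theorem~\ref{thm:radabsolute} does \emph{not} give a pointwise bound $\seqrad_n(\ell\circ\F)\le C\,\seqrad_n(\F)$ via a sequential contraction argument---a direct Lipschitz contraction for sequential Rademacher complexity is delicate and is not the route the paper takes; instead, the theorem assumes $\sfatgamma(\F)<\infty$ for all $\gamma>0$ and deduces $\limsup_n \seqrad_n(\ell\circ\F)\le 0$ directly via fractional covering-number estimates. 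Since you only invoke the conclusion as a black box, your argument is unaffected.
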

\begin{proof}
From Lemma~\ref{lem:preqvsonline}, we know that if a class is prequentially learnable then it is online learnable. For the other direction, using Lemma~\ref{lem:preqvsonline2}, we have
\begin{align*}
\valpreq_n(\F) 
&\le
\valonline_n(\F)
+
2 \seqrad_n(\ell \circ \F) \ .
\end{align*}
From online learnability of $\F$ and Theorem~\ref{thm:radabsolute} in Appendix~\ref{sec:losstofunction}, we know that the $\lim\sup$ of both the quantities on the right is at most zero as $n$ tends to infinity, giving the desired result.
\end{proof}
\color{black}

\section{Conclusion}

In this paper we have proposed two new definitions of learnability of a class of functions under general non-iid stochastic processes. For the first definition, we showed that learnability is equivalent to online learnability. This equivalence also holds for the second definition, which is a prequential version of the first. We also showed how to extend our results from binary classification to the regression setting.

Our work poses several interesting questions for further investigation.  First, we defined learnability using expectations. It will be good to derive high probability results. Second, we ignored the issue of convergence rates for simplicity. It should be possible to extend our analysis to extract information about rates of convergence. This is because the tools from iid and online learning that we use are powerful enough to give us information about rates. Third, instead of using a normalizing factor of $n$, the sample size, other data dependent normalizing factors could be of interest in applications. In this context, the theory of self-normalized processes comes to mind \citep{pena2008self}. Fourth, iid learning theory has been extended to deal with privacy constraints. Starting from the seminal work of \citet{kasiviswanathan2011can} these efforts have looked at a formalization of user privacy known as {\em differential privacy}. Surprisingly, recent work \citep{alon2019private,bun2020equivalence} has shown that iid learnability under the additional constraint of approximate differential privacy is equivalent to online learnability! It will be interesting to study learnability under general stochastic processes with additional privacy constraints on the learning algorithm. Last, but certainly not least, there is a need to connect various strands of learning theory research on non-iid processes. It is unlikely that there is a single definition of learnability that is satisfactory for all purposes. We hope we have proposed two interesting and useful ones. Comparing and contrasting various existing definitions and approaches is an important goal for future work in this area.

\subsection*{Acknowledgments}
Thanks to the organizers and attendees of the Fifth Bayesian, Fiducial, and Frequentist Conference (BFF5)
held from May 6-9, 2018 in Ann Arbor, MI, USA. Their feedback on a preliminary version of this work was
very useful. We also thank the two HDSR (Harvard Data Science Review) reviewers who made numerous helpful suggestions to improve the paper.

\bibliographystyle{plainnat}
\bibliography{references.bib}

\begin{thebibliography}{39}
\providecommand{\natexlab}[1]{#1}
\providecommand{\url}[1]{\texttt{#1}}
\expandafter\ifx\csname urlstyle\endcsname\relax
  \providecommand{\doi}[1]{doi: #1}\else
  \providecommand{\doi}{doi: \begingroup \urlstyle{rm}\Url}\fi

\bibitem[Aldous and Vazirani(1995)]{aldous1995markovian}
David Aldous and Umesh Vazirani.
\newblock A {M}arkovian extension of {V}aliant's learning model.
\newblock \emph{Information and Computation}, 117\penalty0 (2):\penalty0
  181--186, 1995.

\bibitem[Alon et~al.(1997)Alon, Ben-David, Cesa-Bianchi, and
  Haussler]{alon1997scale}
Noga Alon, Shai Ben-David, Nicolo Cesa-Bianchi, and David Haussler.
\newblock Scale-sensitive dimensions, uniform convergence, and learnability.
\newblock \emph{Journal of the ACM}, 44\penalty0 (4):\penalty0 615--631, 1997.

\bibitem[Alon et~al.(2019)Alon, Livni, Malliaris, and Moran]{alon2019private}
Noga Alon, Roi Livni, Maryanthe Malliaris, and Shay Moran.
\newblock Private {PAC} learning implies finite {L}ittlestone dimension.
\newblock In \emph{{Proceedings of the 51st Annual {ACM SIGACT} Symposium on
  Theory of Computing}}, pages 852--860, 2019.

\bibitem[Alon et~al.(2021)Alon, Ben-Eliezer, Dagan, Moran, Naor, and
  Yogev]{alon2021adversarial}
Noga Alon, Omri Ben-Eliezer, Yuval Dagan, Shay Moran, Moni Naor, and Eylon
  Yogev.
\newblock Adversarial laws of large numbers and optimal regret in online
  classification.
\newblock In \emph{{Proceedings of the 53rd Annual ACM SIGACT Symposium on
  Theory of Computing}}, pages 447--455, 2021.

\bibitem[Alquier and Wintenberger(2012)]{alquier2012model}
Pierre Alquier and Olivier Wintenberger.
\newblock Model selection for weakly dependent time series forecasting.
\newblock \emph{Bernoulli}, 18\penalty0 (3):\penalty0 883--913, 2012.

\bibitem[Anthony and Bartlett(1999)]{anthony1999neural}
Martin Anthony and Peter~L Bartlett.
\newblock \emph{{Neural Network Learning: Theoretical Foundations}}.
\newblock Cambridge University Press, 1999.

\bibitem[Ben-David et~al.(2009)Ben-David, P{\'a}l, and
  Shalev-Shwartz]{ben-david2009agnostic}
Shai Ben-David, D{\'a}vid P{\'a}l, and Shai Shalev-Shwartz.
\newblock Agnostic online learning.
\newblock In \emph{{Proceedings of the 22nd Annual Conference on Learning
  Theory}}, 2009.
\newblock URL \url{https://www.cs.mcgill.ca/~colt2009/papers/032.pdf}.

\bibitem[Block et~al.(2021)Block, Dagan, and Rakhlin]{block2021majorizing}
Adam Block, Yuval Dagan, and Sasha Rakhlin.
\newblock Majorizing measures, sequential complexities, and online learning.
\newblock \emph{arXiv preprint arXiv:2102.01729}, 2021.

\bibitem[Bun et~al.(2020)Bun, Livni, and Moran]{bun2020equivalence}
Mark Bun, Roi Livni, and Shay Moran.
\newblock An equivalence between private classification and online prediction.
\newblock In \emph{{Proceedings of the 61st Annual {IEEE} Symposium on
  Foundations of Computer Science}}, 2020.

\bibitem[Cesa-Bianchi and Lugosi(2006)]{cesa-bianchi2006prediction}
Nicolo Cesa-Bianchi and G{\'a}bor Lugosi.
\newblock \emph{{Prediction, Learning, and Games}}.
\newblock Cambridge University Press, 2006.

\bibitem[Dagan et~al.(2019)Dagan, Daskalakis, Dikkala, and
  Jayanti]{dagan2019learning}
Yuval Dagan, Constantinos Daskalakis, Nishanth Dikkala, and Siddhartha Jayanti.
\newblock Learning from weakly dependent data under {Dobrushin’s} condition.
\newblock In Alina Beygelzimer and Daniel Hsu, editors, \emph{{Proceedings of
  the Thirty-Second Conference on Learning Theory}}, volume~99 of
  \emph{Proceedings of Machine Learning Research}, pages 914--928. PMLR, 2019.

\bibitem[Dawid(1984)]{dawid1984present}
A~Philip Dawid.
\newblock Present position and potential developments: Some personal views.
  statistical theory. the prequential approach (with discussion).
\newblock \emph{Journal of the Royal Statistical Society: Series A (General)},
  147\penalty0 (2):\penalty0 278--290, 1984.

\bibitem[Gamarnik(2003)]{gamarnik2003extension}
David Gamarnik.
\newblock Extension of the {PAC} framework to finite and countable markov
  chains.
\newblock \emph{IEEE Transactions on Information Theory}, 49\penalty0
  (1):\penalty0 338--345, 2003.

\bibitem[Hanneke(2017)]{hanneke2017learning}
Steve Hanneke.
\newblock Learning whenever learning is possible: Universal learning under
  general stochastic processes.
\newblock \emph{arXiv preprint arXiv:1706.01418}, 2017.

\bibitem[Jung et~al.(2020)Jung, Kim, and Tewari]{jung2020equivalence}
Young Jung, Baekjin Kim, and Ambuj Tewari.
\newblock On the equivalence between online and private learnability beyond
  binary classification.
\newblock In \emph{{Advances in Neural Information Processing Systems}},
  volume~33, pages 16701--16710, 2020.

\bibitem[Kasiviswanathan et~al.(2011)Kasiviswanathan, Lee, Nissim,
  Raskhodnikova, and Smith]{kasiviswanathan2011can}
Shiva~Prasad Kasiviswanathan, Homin~K Lee, Kobbi Nissim, Sofya Raskhodnikova,
  and Adam Smith.
\newblock What can we learn privately?
\newblock \emph{SIAM Journal on Computing}, 40\penalty0 (3):\penalty0 793--826,
  2011.

\bibitem[Kuznetsov and Mohri(2015)]{kuznetsov2015learning}
Vitaly Kuznetsov and Mehryar Mohri.
\newblock Learning theory and algorithms for forecasting non-stationary time
  series.
\newblock In \emph{{Advances in Neural Information Processing Systems}}, pages
  541--549, 2015.

\bibitem[Kuznetsov and Mohri(2017)]{kuznetsov2017generalization}
Vitaly Kuznetsov and Mehryar Mohri.
\newblock Generalization bounds for non-stationary mixing processes.
\newblock \emph{Machine Learning}, 106\penalty0 (1):\penalty0 93--117, 2017.

\bibitem[Littlestone(1988)]{littlestone1988learning}
Nick Littlestone.
\newblock Learning quickly when irrelevant attributes abound: A new
  linear-threshold algorithm.
\newblock \emph{Machine Learning}, 2\penalty0 (4):\penalty0 285--318, 1988.

\bibitem[Lozano et~al.(2006)Lozano, Kulkarni, and
  Schapire]{lozano2006convergence}
Aur{\'e}lie~C Lozano, Sanjeev~R Kulkarni, and Robert~E Schapire.
\newblock Convergence and consistency of regularized boosting algorithms with
  stationary $\beta$-mixing observations.
\newblock In \emph{{Advances in Neural Information Processing Systems}}, pages
  819--826, 2006.

\bibitem[Meir(2000)]{meir2000nonparametric}
Ron Meir.
\newblock Nonparametric time series prediction through adaptive model
  selection.
\newblock \emph{Machine Learning}, 39\penalty0 (1):\penalty0 5--34, 2000.

\bibitem[Mendelson(2002)]{mendelson2002rademacher}
Shahar Mendelson.
\newblock Rademacher averages and phase transitions in {G}livenko-{C}antelli
  classes.
\newblock \emph{IEEE Transactions on Information Theory}, 48\penalty0
  (1):\penalty0 251--263, 2002.

\bibitem[Modha and Masry(1998)]{modha1998memory}
Dharmendra~S Modha and Elias Masry.
\newblock Memory-universal prediction of stationary random processes.
\newblock \emph{IEEE Transactions on Information Theory}, 44\penalty0
  (1):\penalty0 117--133, 1998.

\bibitem[Mohri and Rostamizadeh(2009)]{mohri2009rademacher}
Mehryar Mohri and Afshin Rostamizadeh.
\newblock Rademacher complexity bounds for non-iid processes.
\newblock In \emph{{Advances in Neural Information Processing Systems}}, pages
  1097--1104, 2009.

\bibitem[Nobel(1999)]{nobel1999limits}
Andrew~B Nobel.
\newblock Limits to classification and regression estimation from ergodic
  processes.
\newblock \emph{The Annals of Statistics}, 27\penalty0 (1):\penalty0 262--273,
  1999.

\bibitem[Pe{\~n}a et~al.(2008)Pe{\~n}a, Lai, and Shao]{pena2008self}
Victor~H Pe{\~n}a, Tze~Leung Lai, and Qi-Man Shao.
\newblock \emph{{Self-Normalized Processes: Limit Theory and Statistical
  Applications}}.
\newblock Springer Science \& Business Media, 2008.

\bibitem[Pestov(2010)]{pestov2010predictive}
Vladimir Pestov.
\newblock Predictive {PAC} learnability: A paradigm for learning from
  exchangeable input data.
\newblock In \emph{{2010 {IEEE} International Conference on Granular
  Computing}}, pages 387--391. IEEE, 2010.

\bibitem[Rakhlin et~al.(2011)Rakhlin, Sridharan, and Tewari]{rakhlin2011online}
Alexander Rakhlin, Karthik Sridharan, and Ambuj Tewari.
\newblock Online learning: Stochastic, constrained, and smoothed adversaries.
\newblock In \emph{{Advances in Neural Information Processing Systems}}, pages
  1764--1772, 2011.

\bibitem[Rakhlin et~al.(2015{\natexlab{a}})Rakhlin, Sridharan, and
  Tewari]{rakhlin2015online}
Alexander Rakhlin, Karthik Sridharan, and Ambuj Tewari.
\newblock Online learning via sequential complexities.
\newblock \emph{The Journal of Machine Learning Research}, 16\penalty0
  (1):\penalty0 155--186, 2015{\natexlab{a}}.

\bibitem[Rakhlin et~al.(2015{\natexlab{b}})Rakhlin, Sridharan, and
  Tewari]{rakhlin2015sequential}
Alexander Rakhlin, Karthik Sridharan, and Ambuj Tewari.
\newblock Sequential complexities and uniform martingale laws of large numbers.
\newblock \emph{Probability Theory and Related Fields}, 161\penalty0
  (1-2):\penalty0 111--153, 2015{\natexlab{b}}.

\bibitem[Shalev-Shwartz and Ben-David(2014)]{shalev2014understanding}
Shai Shalev-Shwartz and Shai Ben-David.
\newblock \emph{{Understanding Machine Learning: From Theory to Algorithms}}.
\newblock Cambridge University Press, 2014.

\bibitem[Shalizi and Kontorovich(2013)]{shalizi2013predictive}
Cosma Shalizi and Aryeh Kontorovich.
\newblock Predictive {PAC} learning and process decompositions.
\newblock In \emph{{Advances in Neural Information Processing Systems}}, pages
  1619--1627, 2013.

\bibitem[Skouras and Dawid(2000)]{skouras2000consistency}
Kostas Skouras and A~Philip Dawid.
\newblock Consistency in misspecified models.
\newblock Technical Report 218, Department of Statistical Science, University
  College London, 12 2000.
\newblock URL
  \url{https://www.ucl.ac.uk/drupal/site_statistics/sites/statistics/files/rr218.pdf}.

\bibitem[Smale and Zhou(2009)]{smale2009online}
Steve Smale and Ding-Xuan Zhou.
\newblock Online learning with {M}arkov sampling.
\newblock \emph{Analysis and Applications}, 7\penalty0 (01):\penalty0 87--113,
  2009.

\bibitem[Steinwart and Anghel(2009)]{steinwart2009consistency}
Ingo Steinwart and Marian Anghel.
\newblock Consistency of support vector machines for forecasting the evolution
  of an unknown ergodic dynamical system from observations with unknown noise.
\newblock \emph{The Annals of Statistics}, 37\penalty0 (2):\penalty0 841--875,
  2009.

\bibitem[Steinwart et~al.(2009)Steinwart, Hush, and
  Scovel]{steinwart2009learning}
Ingo Steinwart, Don Hush, and Clint Scovel.
\newblock Learning from dependent observations.
\newblock \emph{Journal of Multivariate Analysis}, 100\penalty0 (1):\penalty0
  175--194, 2009.

\bibitem[Vidyasagar(2002)]{vidyasagar2002theory}
Mathukumalli Vidyasagar.
\newblock \emph{{A Theory of Learning and Generalization}}.
\newblock Springer-Verlag, second edition, 2002.

\bibitem[von Luxburg and Sch{\"o}lkopf(2011)]{von-luxburg2011statistical}
Ulrike von Luxburg and Bernhard Sch{\"o}lkopf.
\newblock Statistical learning theory: Models, concepts, and results.
\newblock In \emph{{Handbook of the History of Logic}}, volume~10, pages
  651--706. Elsevier, 2011.

\bibitem[Zimin and Lampert(2017)]{zimin2017learning}
Alexander Zimin and Christoph Lampert.
\newblock Learning theory for conditional risk minimization.
\newblock In \emph{{Artificial Intelligence and Statistics}}, pages 213--222,
  2017.

\end{thebibliography}

\appendix

\section{Relating the complexity of the loss class to the function class}
\label{sec:losstofunction}

\newedit{We first consider the $0$-$1$ loss function and then give a result for the absolute loss.}

\subsection{Zero-One Loss}

The result below is essentially already known. The main ideas are present in published work~\citep{rakhlin2011online}. We just present the result in a form that is immediately useful to us.
But before we do that, we need a useful lemma.

\begin{lemma}
\label{lem:stillrad}
For any sequence $\epsilon_{1:n}$ of iid Rademacher random variables and any fixed $\{\pm1\}$-valued tree $\s$, the sequence $\left(\epsilon_t \s_t(\epsilon_{1:t-1}) \right)_{t=1}^n$ is also iid Rademacher.
\end{lemma}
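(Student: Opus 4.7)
The plan is to observe that the map $\Phi: \{\pm 1\}^n \to \{\pm 1\}^n$ defined by
\[
\Phi(\epsilon_{1:n})_t \;=\; \epsilon_t\,\s_t(\epsilon_{1:t-1}),\qquad t=1,\ldots,n,
\]
is a bijection on the finite set $\{\pm 1\}^n$. Once this is in hand the distributional claim is immediate: $\epsilon_{1:n}$ is uniformly distributed on $\{\pm 1\}^n$ by the iid Rademacher assumption, and any bijection of a finite set sends the uniform distribution to itself, so $\Phi(\epsilon_{1:n})$ is also uniform, which is exactly the statement that $(\epsilon_t\,\s_t(\epsilon_{1:t-1}))_{t=1}^n$ is iid Rademacher.

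To show $\Phi$ is a bijection I would construct its inverse explicitly, by recursion on $t$. Because $\s$ is a \emph{fixed} $\{\pm 1\}$-valued tree, every value $\s_t(\cdot)$ lies in $\{\pm 1\}$ and therefore squares to $1$. Given any prospective image $\delta_{1:n} \in \{\pm 1\}^n$, set $\epsilon_1 = \delta_1\,\s_1$, and then, once $\epsilon_{1:t-1}$ has been recovered, define $\epsilon_t = \delta_t\,\s_t(\epsilon_{1:t-1})$. A one-line check using $\s_t(\epsilon_{1:t-1})^2 = 1$ shows that this produces a preimage of $\delta_{1:n}$ under $\Phi$, so $\Phi$ is surjective on the finite set $\{\pm 1\}^n$ and hence bijective.

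An equivalent probabilistic organization, which I would mention in passing, is by conditioning: since $\s$ is deterministic, $\s_t(\epsilon_{1:t-1})$ is a measurable function of $\epsilon_{1:t-1}$ and is independent of $\epsilon_t$; hence the conditional distribution of $\epsilon_t\,\s_t(\epsilon_{1:t-1})$ given $\epsilon_{1:t-1}$ is uniform on $\{\pm 1\}$, and iid follows from the chain rule together with the fact that the previous terms $(\epsilon_s\,\s_s(\epsilon_{1:s-1}))_{s<t}$ are functions of $\epsilon_{1:t-1}$. There is no real obstacle to overcome; the only subtlety worth flagging is that the tree $\s$ is a fixed deterministic object, not a random one, which is what makes the sign flip $\epsilon_t \mapsto \epsilon_t\,\s_t(\epsilon_{1:t-1})$ adapted to the natural filtration and preserve Rademacher structure.
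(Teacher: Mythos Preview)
Your proposal is correct. The paper's own proof is precisely the probabilistic conditioning argument you mention in passing: it checks that $\E{\epsilon_t\,\s_t(\epsilon_{1:t-1})\mid \epsilon_{1:t-1}}=0$ and, since the variable is $\{\pm1\}$-valued, concludes that its conditional law is Rademacher regardless of the past. Your primary argument via the explicit bijection $\Phi$ on $\{\pm1\}^n$ is a genuinely different route: it is purely combinatorial, sidesteps any talk of conditional expectations, and makes the preservation of the uniform law a one-line consequence of bijectivity. The bijection viewpoint has the small bonus that it immediately yields an explicit inverse (useful, e.g., when one later wants to relabel the tree $\x$ along paths indexed by the new signs, as in the proof of Theorem~\ref{thm:radzeroone}); the conditioning viewpoint is slightly shorter and generalizes verbatim to infinite horizons or to adapted sign processes that are not deterministic trees. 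Either argument is fully adequate here.
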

\begin{proof}
It is easy to see that the sequence of random variables we have constructed takes values in $\{\pm1\}$. We just need to check that the distribution of $\epsilon_t \s_t(\epsilon_{1:t-1})$ conditioned on $\epsilon_{1:t-1}$ is a fixed distribution independent of the past. This is readily verified since
\begin{align*}
\E{ \epsilon_t \s_t(\epsilon_{1:t-1}) | \epsilon_{1:t-1} } = \s_t(\epsilon_{1:t-1}) \E{ \epsilon_t | \epsilon_{1:t-1} } 
= 0 .
\end{align*}
Therefore, we have shown that the distribution of $\epsilon_t \s_t(\epsilon_{1:t-1)} $ conditioned on $\epsilon_{1:t-1}$ is always Rademacher (symmetric Bernoulli).
\end{proof}

Now we are ready to state and prove the main result of this subsection.

\begin{theorem}
\label{thm:radzeroone}
Let $\F$ be a binary valued function class and let $\ell$ be the $0$-$1$ loss function. Then we have,
\[
\seqrad_n(\ell \circ \F)
= \frac{1}{2} \seqrad_n(\F) .
\]
\end{theorem}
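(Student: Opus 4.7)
My plan is to reduce the sequential Rademacher complexity of the $0$-$1$ loss class to that of $\F$ by exploiting the identity $\ind{y \neq f(x)} = (1 - y f(x))/2$ for $y, f(x) \in \{\pm 1\}$, and then to match both inequalities using a change of path-variables.

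For the upper bound, I would fix an arbitrary $\Z$-valued tree $\z = (\x, \y)$ of depth $n$ and rewrite
\[
\seqrad_n(\z, \ell \circ \F)
= \E{\sup_{f \in \F} \frac{1}{n} \sum_{t=1}^n \epsilon_t \cdot \frac{1 - \y_t(\epsilon_{1:t-1}) f(\x_t(\epsilon_{1:t-1}))}{2}}.
\]
The $\frac{1}{2n}\sum_t \epsilon_t$ piece does not depend on $f$ and has expectation zero, so it drops out. Using the symmetry $-\epsilon_t \stackrel{d}{=} \epsilon_t$ (applied jointly, which is valid since the joint law of $\epsilon_{1:n}$ is invariant under coordinate-wise sign flips), we obtain
\[
\seqrad_n(\z, \ell \circ \F) = \frac{1}{2}\, \E{\sup_{f \in \F} \frac{1}{n} \sum_{t=1}^n \epsilon_t\, \y_t(\epsilon_{1:t-1})\, f(\x_t(\epsilon_{1:t-1}))}.
\]

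The main obstacle is to convert the inner expression into a genuine sequential Rademacher complexity of $\F$, since the perturbation $\y_t(\epsilon_{1:t-1})$ depends on the same $\epsilon$-path that drives $\x_t$. To handle this, I would introduce the change of variables $\sigma_t := \epsilon_t\, \y_t(\epsilon_{1:t-1})$. Lemma~\ref{lem:stillrad} shows $\sigma_{1:n}$ is iid Rademacher. Because $\y$ is $\{\pm1\}$-valued, the map $\epsilon_{1:t-1} \mapsto \sigma_{1:t-1}$ is invertible recursively, i.e.\ there is a measurable $\phi_{t-1}$ with $\epsilon_{1:t-1} = \phi_{t-1}(\sigma_{1:t-1})$. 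Define a new $\X$-valued tree $\tilde{\x}$ by $\tilde{\x}_t(\sigma_{1:t-1}) := \x_t(\phi_{t-1}(\sigma_{1:t-1}))$. Then
\[
\E_\epsilon{\sup_{f \in \F} \frac{1}{n}\sum_{t=1}^n \epsilon_t \y_t(\epsilon_{1:t-1}) f(\x_t(\epsilon_{1:t-1}))}
= \E_\sigma{\sup_{f \in \F} \frac{1}{n}\sum_{t=1}^n \sigma_t f(\tilde{\x}_t(\sigma_{1:t-1}))}
= \seqrad_n(\tilde{\x}, \F).
\]
Taking the supremum over $\z$ yields $\seqrad_n(\ell \circ \F) \le \tfrac{1}{2}\seqrad_n(\F)$.

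For the matching lower bound, given any $\X$-valued tree $\x$ take $\z = (\x, \y)$ with the constant witness tree $\y \equiv +1$. The computation above then gives $\seqrad_n(\z, \ell \circ \F) = \tfrac{1}{2}\seqrad_n(\x, \F)$ exactly (the change of variables becomes the identity), and supremizing over $\x$ yields $\seqrad_n(\ell \circ \F) \ge \tfrac{1}{2}\seqrad_n(\F)$. Combining both directions gives equality.
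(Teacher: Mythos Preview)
Your approach is essentially the same as the paper's: exploit the identity $\ind{y\neq f(x)}=(1-yf(x))/2$, drop the constant, and then use Lemma~\ref{lem:stillrad} to pass from $\epsilon$ to a new iid Rademacher sequence, thereby identifying the expression with the sequential Rademacher complexity of $\F$ on a transformed $\X$-valued tree. The paper obtains both directions at once by arguing that the transformation $(\x,\s)\mapsto\x'$ is onto all $\X$-valued trees; your two-inequality route (arbitrary $\y$ for the upper bound, $\y\equiv+1$ for the lower bound) reaches the same conclusion and is equally valid.

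One genuine slip: the step ``using the symmetry $-\epsilon_t\stackrel{d}{=}\epsilon_t$ (applied jointly)'' does not yield the displayed equation for a \emph{fixed} tree $(\x,\y)$. Replacing $\epsilon_{1:n}$ by $-\epsilon_{1:n}$ flips the outer sign but also changes the arguments of $\y_t$ and $\x_t$ to $-\epsilon_{1:t-1}$, giving
\[
\frac{1}{2}\,\E{\sup_{f\in\F}\frac{1}{n}\sum_{t=1}^n \epsilon_t\,\y_t(-\epsilon_{1:t-1})\,f(\x_t(-\epsilon_{1:t-1}))},
\]
which is the complexity of a different (reflected) tree, not of the original $(\x,\y)$. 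The fix is immediate and does not affect the rest of your argument: simply skip this step and set $\sigma_t:=-\epsilon_t\,\y_t(\epsilon_{1:t-1})=\epsilon_t\cdot(-\y_t(\epsilon_{1:t-1}))$, which is exactly what the paper does with $\s=-\y$; Lemma~\ref{lem:stillrad} applies unchanged and your change-of-variables and lower-bound paragraphs go through verbatim.
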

\begin{proof}
Instead of using $\Z$-valued trees to define $\seqrad_n(\ell \circ \F)$, we will use a pair $\x, \y$ of $\X$- and $\Y$-valued trees. The equality we are trying to prove can then be written as:
\begin{equation*}
2\,\sup_{\x,\y} \E{
\sup_{f \in \F} \frac{1}{n} \sum_{t=1}^n \epsilon_t
\ind{ \y_t(\epsilon_{1:t-1}) \neq
f(\x_t(\epsilon_{1:t-1})) }
} 
= \sup_{\x} \E{
\sup_{f \in \F} \frac{1}{n} \sum_{t=1}^n \epsilon_t
f(\x_t(\epsilon_{1:t-1})) 
}.
\end{equation*}
For $y_1,y_2 \in \{\pm 1\}$, we can write $2 \ind{y_1 \neq y_2}$ as $1-y_1y_2$. Note that sequential Rademacher complexity is not affected if the entire function classes is shifted by a constant. Therefore, the left hand side is equal to
\[
\sup_{\x,\y} \E{
\sup_{f \in \F} \frac{1}{n} \sum_{t=1}^n -\epsilon_t
\y_t(\epsilon_{1:t-1})
f(\x_t(\epsilon_{1:t-1})) 
}.
\]
Now consider the $\{ \pm1 \}$-valued tree $\s = -\y$. From
Lemma~\ref{lem:stillrad}, we know the above is equal to
\[
\sup_{\x,\s} \E{
\sup_{f \in \F} \frac{1}{n} \sum_{t=1}^n \epsilon_t
f(\x_t(\epsilon_{1:t-1} \cdot \s_{1:t-1}(\epsilon)))
} ,
\]
where $\epsilon_{1:t-1} \cdot \s_{1:t-1}(\epsilon)$ denotes the sequence
\[
\epsilon_1 \cdot \s_1, \epsilon_2 \cdot \s_2(\epsilon_1), \ldots, \epsilon_{t-1} \cdot \s_{t-1}(\epsilon_{1:t-2}).
\]
Define the tree $\x'$ as $\x'(\epsilon_{1:t}) = \x_t(\epsilon_{1:t-1} \cdot \s_{1:t-1}(\epsilon))$
and note that as $\x$ ranges over all $\X$-valued trees and $\s$ ranges over all $\{\pm 1\}$-valued trees, $\x'$ ranges over all $\X$-valued trees. Therefore, the supremum over the pair $\x,\s$ above can simply be written as
\[
\sup_{\x'} \E{
\sup_{f \in \F} \frac{1}{n} \sum_{t=1}^n \epsilon_t
f(\x'_t(\epsilon_{1:t-1}))
}. 
\]
As we noted, $\x'$ ranges over all $\X$-valued trees making the above quantity the same as $\seqrad_n(\F)$ which finishes the proof. 
\end{proof}

\subsection{Absolute Loss}

\color{black}

We now consider the absolute loss case. The only property of the absolute loss used in the proof below is that it is 1-Lipschitz (in either argument provided the other one is fixed).

\begin{theorem}
\label{thm:radabsolute}
Let $\F$ be a bounded real valued function class such that $\sfatgamma(\F) < \infty$ for all $\gamma > 0$. Let $\ell$ be the absolute loss. Then we have
\[
\limsup_{n \to \infty} \seqrad_n(\ell \circ \F) \le 0 \ .
\]
\end{theorem}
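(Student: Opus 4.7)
The plan is to reduce the sequential Rademacher complexity of the loss class $\ell \circ \F$ to the sequential Rademacher complexity of the underlying function class $\F$ via a sequential contraction argument, and then invoke condition~\ref{regseqradcond} of Theorem~\ref{thm:reg_OLT}.

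The key structural property of the absolute loss is that, for each fixed $y \in \Y$, the map $u \mapsto |y - u|$ is $1$-Lipschitz, and the Lipschitz constant $1$ does not depend on $y$. This is exactly the hypothesis needed for a sequential analogue of Talagrand's contraction principle (see, e.g., Lemma~7 of \citet{rakhlin2015sequential}). Applying it branch-by-branch yields, for every $\Z$-valued tree $\z = (\x, \y)$ of depth $n$,
\[
\seqrad_n(\z, \ell \circ \F) \le C \cdot \seqrad_n(\x, \F)
\]
for a universal constant $C$. Taking the supremum over all input-output trees on the left and over all $\X$-valued trees on the right gives $\seqrad_n(\ell \circ \F) \le C \cdot \seqrad_n(\F)$.

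By hypothesis, $\sfatgamma(\F) < \infty$ for every $\gamma > 0$, so condition~\ref{regseqradcond} of Theorem~\ref{thm:reg_OLT} yields $\limsup_{n \to \infty} \seqrad_n(\F) = 0$. Combined with the contraction bound above, this gives $\limsup_{n \to \infty} \seqrad_n(\ell \circ \F) \le 0$, as desired.

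The only nontrivial ingredient is the sequential contraction step. In the iid case it is a classical consequence of the symmetry of Rademacher random variables, but in the tree setting one must perform a careful induction on the depth, at each internal node fusing the two children via the Lipschitz inequality. What makes this work is precisely the uniformity of the Lipschitz constant in the $y$-argument: a non-uniform constant could blow up along some branch of the tree and destroy the reduction.
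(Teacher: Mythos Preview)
Your route is genuinely different from the paper's. The paper bounds $\seqrad_n(\ell\circ\F)$ by a Dudley-type integral of \emph{fractional} sequential covering numbers (Corollary~10 and Theorem~13 of \citet{block2021majorizing}), transfers the $1$-Lipschitz property of the absolute loss at the covering-number level via $N'(\ell\circ\F,\delta)\le N'(\F,\delta)$, and only then lets $n\to\infty$ using finiteness of $\sfat_{c\delta}(\F)$. You instead attempt a direct contraction $\seqrad_n(\ell\circ\F)\le C\,\seqrad_n(\F)$ and then invoke condition~\ref{regseqradcond} of Theorem~\ref{thm:reg_OLT}. If such a clean contraction inequality were in hand, your argument would indeed be shorter and more conceptual.

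The gap is in the contraction step. Your description of its proof---``a careful induction on the depth, at each internal node fusing the two children via the Lipschitz inequality''---does not go through. The one-variable Ledoux--Talagrand step asserts
\[
\sup_{f}\bigl[A(f)+\phi(u(f))\bigr]+\sup_{f}\bigl[A(f)-\phi(u(f))\bigr]\ \le\ \sup_{f}\bigl[A(f)+L\,u(f)\bigr]+\sup_{f}\bigl[A(f)-L\,u(f)\bigr]
\]
for $L$-Lipschitz $\phi$, and it crucially requires the \emph{same} remainder $A(\cdot)$ in both terms. In the tree setting, when you average over $\epsilon_t$ at an internal node, the two signs send you into \emph{different} subtrees, so the remainder functionals for $\epsilon_t=+1$ and $\epsilon_t=-1$ are distinct; the analogous inequality with mismatched remainders is false already for a two-point class. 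This obstruction is exactly why the paper---one of whose authors co-authored the reference you cite---does not appeal to a nodewise contraction and instead proceeds through sequential covers. Contraction-type control of $\seqrad_n(\ell\circ\F)$ can ultimately be recovered, but the known arguments pass through chaining and covering numbers, which is essentially the paper's proof rather than the depth induction you sketch.
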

\begin{proof}
In this proof $c, C$ will denote universal constants that can change from line to line.
From Corollary 10 of \citet{block2021majorizing}, we have
\begin{equation}
\seqrad_n(\ell \circ \F) \le C \cdot
\inf_{\alpha \ge 0}
\left(
\alpha + \frac{1}{\sqrt{n}} \int_{\alpha}^1 \sqrt{ \log N'(\ell \circ \F, \delta)} d \delta 
\right) \ .
\label{eq:radtofrac}
\end{equation}
Here $N'(\ell \circ \F, \delta)$ refers to the fractional covering number of the class $\ell \circ \F$ as defined in Definition 6 of \citet{block2021majorizing}. From the definition of the fractional covering number and the fact that the absolute loss is $1$-Lipschitz, we have $ N'(\ell \circ \F, \delta) \le N'(\F, \delta)$. Moreover, by Theorem 13 of \citet{block2021majorizing}, we have 
\[
 N'(\F, \delta) \le \left(\frac{C}{\delta}\right)^{3 \, \sfat_{c \delta}(\F)} \ .
\]
Fix some $\alpha > 0$. Plugging the bound above into~\eqref{eq:radtofrac} gives us,
\[
\seqrad_n(\ell \circ \F) \le C \cdot
\left(
\alpha + \frac{1}{\sqrt{n}} \int_{\alpha}^1 \sqrt{ \sfat_{c \delta}(\F) \cdot \log(1/\delta) } d \delta 
\right) \ .
\]
Now since $\sfat_{c \delta}(\F) < \infty$ for all $\delta > 0$, taking limits with respect to $n$ on both sides gives us
\[
\limsup_{n \to \infty} \seqrad_n(\ell \circ \F) \le C \cdot \alpha \ .
\]
Since $\alpha > 0$ was arbitrary this proves the result.
\end{proof}
\color{black}

\section{Proofs of Lower Bounds}

\begin{proof}[Proof of Theorem~\ref{thm:ldimlowerbound}]
Since the Littlestone dimension of $\F$ is infinite, by Theorem 3 of \citet{alon2019private} it contains $N = 2^{2^n+1}$ thresholds. This means that there are $N$ functions $f_1,\ldots,f_{N} \in \F$ and $N$ examples
$x_1,\ldots,x_{N} \in \X$ such that for all $i,j\le N$,
\[
f_j(x_i) = 1 \text{ if and only if } i \le j .
\]
Without loss of generality identify these $m$ examples with integers $1$ through $N$ written in binary notation (with enough zero padding to the left to make the binary encoding a bit vector of length exactly $2^n+1$) and the functions with threshold functions
\[
x \mapsto \ind{x \le \bitv}
\]
for bit vectors $\bitv$ of length $2^n+1$.

We will now define a stochastic process $(X_t,Y_t), 1 \le t \le n$ indexed by bit vectors $\bitv$ of length $2^n$. The labels $Y_t$ will be deterministic given $X_t$ chosen as $Y_t = \ind{X_t \le \bitv 1}$ where $\bitv 1$ has a $1$ added at the end and is therefore of length $2^n+1$. So we only need to define a process $X_1,\ldots,X_n$. This is defined as follows. Let $\epsilon_{1:n}$ be iid Rademacher random variables.

\begin{itemize}
    \item $\ell_0 = 0$
    \item For $t = 1$ to $n$
    \begin{itemize}
        \item If $\epsilon_t = +1$: $\ell_{t} = \ell_{t-1} + 2^{n-t}$ else: $\ell_{t} = \ell_{t-1}$ 
        \item $\bitv_t = (2^n+1)$-length bit vector with same $\ell_t$-length prefix as $\bitv$, $(\ell_t+1)$st bit equal to 1, and rest padded with zeros
        \item Output $X_t = \bitv_t$
    \end{itemize}
\end{itemize}

If all $\epsilon_t$'s turn out to be $+1$ (an event with probability $2^{-n}$), $\ell_n$ can become as large as
\[
2^{n-1} + 2^{n-2} + \ldots + 2 + 1 = 2^n - 1
\]
which still leaves two bits to add a $``10"$ at the end. So we do have enough bits available for all possible increases of resolution. Note that the true function is a threshold at an odd integer whereas the sampled $X_t$'s are always even integers. Finally, note that by construction $Y_t = \ind{X_t \le \bitv 1} = \ind{\bitv_t \le \bitv 1} = \bitv[\ell_t+1]$ where $\bitv[\ell]$ is the $\ell$th bit of $\bitv$.

Denote the stochastic process defined above by $\pp_\bitv$. Our proof will follow the probabilistic method replacing the supremum over $\pp$ in the definition of $\valgen_n(\F)$ with an expectation over $\pp$ with $\pp$ chosen to be $\pp_\bitv$ with the bit vector $\bitv$ chosen uniformly at random. That is,
\begin{align*}
\valgen_n(\fhat_n, \F)
&= \sup_{\pp} \E{
 \frac{1}{n} \sum_{t=1}^n \ell(P_t, \fhat_n)
                        - \inf_{f \in \F} \frac{1}{n} \sum_{t=1}^n \ell(P_t, f)
} \\
&\ge \Es{\bitv}{\Es{\pp_\bitv}{
 \frac{1}{n} \sum_{t=1}^n \ell(P_t, \fhat_n)
                        - \inf_{f \in \F} \frac{1}{n} \sum_{t=1}^n \ell(P_t, f)
}} .
\end{align*}
Note that for every $\bitv$, $Y_t = f_\bitv(X_t)$ for some $f_\bitv \in \F$ which means that the infimum above is zero for every $\bitv$. Therefore, we have
\begin{align*}
\valgen_n(\fhat_n, \F)
&\ge \Es{\bitv}{\Es{\pp_\bitv}{
 \frac{1}{n} \sum_{t=1}^n \ell(P_t, \fhat_n)
}} .
\end{align*}
Now we want to argue that
$\forall t \in \{1,\ldots,n\}$, 
\begin{align}
\label{eq:lowerbdtoshow}
\E{
\ell(P_t, \fhat_n)
} \ge \frac{1}{8} .
\end{align}
Note the two sources of randomness in this expectation: the random bit vector $\bitv$ and the Rademacher random variables $\epsilon_{1:n}$. Also, note that $\fhat_n$ does not have access to $\bitv$ but only to the observed sample
\[
(X_1,Y_1),
\ldots,
(X_n, Y_n)
=
(\bitv_1,\bitv[\ell_1+1]),
\ldots,
(\bitv_n,\bitv[\ell_n+1])
\]
Note that $P_t$ puts probability $1/2$ each on a ``high resolution'' value $\bitv_t^+$ (corresponding to $\epsilon_t=+1$) and a ``low resolution'' value $\bitv_t^-$ (corresponding to $\epsilon_t = -1$). Therefore, we have,
\[
\ell(P_t, \fhat_n)
= \frac{1}{2} \ind{ \fhat_n(\bitv_t^-) \neq f_\bitv(\bitv_t^-) }
+ \frac{1}{2} \ind{ \fhat_n(\bitv_t^+) \neq f_\bitv(\bitv_t^+) } .
\]
The expectation of the quantity above can be lower bounded as,
\begin{align}
\notag
\E{
\ell(P_t, \fhat_n)
}
&\ge
\frac{1}{2} \E{
 \ind{ \fhat_n(\bitv_t^+) \neq f_\bitv(\bitv_t^+) }
} \\
\notag
&\ge \frac{1}{2}
\prob{\epsilon_t = -1}
\E{
 \ind{ \fhat_n(\bitv_t^+) \neq f_\bitv(\bitv_t^+) \middle| \epsilon_t = -1 }
} \\
&=
\label{eq:condepsilon}
\frac{1}{4}
\E{
 \ind{ \fhat_n(\bitv_t^+) \neq f_\bitv(\bitv_t^+) \middle| \epsilon_t = -1 }
} .
\end{align}
Now note that $f_\bitv(\bitv_t^+)$ is simply equal to $\bitv[\ell_{t-1}+2^{n-t}+1]$. Further note that when $\epsilon_t = -1$, the largest that $\ell_n$ can become is
\[
\ell_{t-1} + 2^{n-t-1} + 2^{n-t-2} + \ldots + 1 = \ell_{t-1} + 2^{n-t} - 1 \ .
\]
This means that, conditioned on $\epsilon_t = -1$, the entire sample is measurable w.r.t. $\bitv[1:\ell_{t-1}+2^{n-t}]$. Since $\bitv_t^+$ is measurable w.r.t. $\bitv[1:\ell_{t-1}+2^{n-t}]$, we can conclude that $f_\bitv(\bitv_t^+)$ is independent of the sample and $\bitv_t^+$ conditioned on $\epsilon_t=-1$. Since the unconditional distribution of $\bitv[\ell_{t-1}+2^{n-t}+1]$ is uniform on $\{0,1\}$, this implies that
\begin{equation}
\label{eq:condepsilonlowerbd}
\E{
 \ind{ \fhat_n(\bitv_t^+) \neq f_\bitv(\bitv_t^+) \middle| \epsilon_t = -1 }
} \ge \frac{1}{2}
\end{equation}
which along with~\eqref{eq:condepsilon} gives~\eqref{eq:lowerbdtoshow}.
\end{proof}

\begin{proof}[Proof of Theorem~\ref{thm:sfatlowerbound}]
Since $\sfatgamma(\F) = \infty$, by Theorem 8 of \citet{jung2020equivalence}, $\F$ contains $N=2^{2^n+1}$ thresholds with margin $\gamma/5$. This means that there are $N$ functions $\tf_1,\ldots,\tf_N \in \F$, $N$ examples $x_1,\ldots,x_N \in \X$ and $u,u'
 \in [-1,+1]$ such that $|u-u'|\ge \gamma/5$, $|\tf_j(x_i)-u|\le \gamma/100$ if $i \le j$, and $|\tf_j(x_i)-u'| \le \gamma/100$ if $i > j$.
 
Without loss of generality assume that $u > u'$. Suppose $\F$ is learnable to accuracy $\epsilon$. We want to show that by choosing $\epsilon$ to be sufficiently small, e.g., $\epsilon = \gamma/500$,  we can get a contradiction with the lower bound established in proof of Theorem~\ref{thm:ldimlowerbound} above. Consider the stochastic process $X_t,Y_t$ constructed in the proof above. We transform it into the regression setting by converting the binary labels $Y_t$ into real values $\tilde{Y}_t$ as follows:
\begin{equation*}
\tilde{Y}_t = 
\begin{cases}
u & \text{if } Y_t = 1\ ,\\
u' & \text{if } Y_t = 0\ .
\end{cases}
\end{equation*}
The transformed data consisting of pairs $(X_t,\tilde{Y}_t)$ is fed into the $\epsilon$-accurate learner $\fhat_n$ for which we have the guarantee that
\begin{equation*}
    \E{
 \frac{1}{n} \sum_{t=1}^n \ell(\tP_t, \fhat_n)
                        - \inf_{f \in \F} \frac{1}{n} \sum_{t=1}^n \ell(\tP_t, f)
} \le \epsilon
\end{equation*}
where $\ell$ is the absolute loss and $\tP_t$ is the conditional distribution defined with respect to the transformed process $X_t,\tilde{Y}_t$. Note that the labels $Y_t$ were generated using a binary threshold function $f_j$ which can be approximated to an error within  $\gamma/100$ (in supremum norm) by some $\tf_j$. This means that
\[
\inf_{f \in \F} \frac{1}{n} \sum_{t=1}^n \ell(\tP_t, f)
\le \gamma/100 .
\]
Therefore, for some $t$,
\[
\E{
 \ell(\tP_t, \fhat_n)
 } \le \epsilon + \gamma/100
\]
Now consider the \emph{binary classifier} $\fhat'_n$ obtained from the real valued function $\fhat_n$ as follows:
\begin{equation*}
\fhat'_n(x) = 
\begin{cases}
1 & \text{if } \fhat_n(x) > (u+u')/2\ ,\\
0 & \text{otherwise}\ .
\end{cases}
\end{equation*}
We know from~\eqref{eq:condepsilon} and~\eqref{eq:condepsilonlowerbd} that $\fhat'_n$ will predict the label of $\bitv_t^+$ incorrectly with probability at least $1/4$. Because of the $\gamma/5$ gap between $u$ and $u'$ this means that $\fhat_n$ incurs an absolute loss of at least $\gamma/10$ on $\bitv_t^+$ and its real valued label (which is either $u$ or $u'$) with probability at least $1/4$. Recalling that $\tP_t$ puts probability mass $1/2$ on $\bitv_t^+$, we therefore have
\[
\E{
 \ell(\tP_t, \fhat_n)
 } \ge \frac{1}{2} \cdot \frac{1}{4} \cdot \frac{\gamma}{10} = \frac{\gamma}{80}
\]
which means
\[
\frac{\gamma}{80} \le \epsilon + \frac{\gamma}{100} .
\]
In order for this to give us a contradiction we just need to ensure that $\epsilon < \gamma/400$. The choice $\epsilon = \gamma/500$ does that.
\end{proof}

\end{document}